\newtheorem{thm}{Theorem}
\newtheorem{defn}{Definition}
\newtheorem{lemma}{Lemma}
\newtheorem{corr}{Corollary}
	\newcommand{\nf}{\mathcal{F}}	
	\newcommand{\reals}{\mathbb{R}}
	\newcommand{\nr}{\mathbb{R}}
	\newcommand{\np}{\mathbb{P}}
	\newcommand{\nq}{\mathbb{Q}}
	\newcommand{\nE}{\mathbb{E}}
	\newcommand{\lamoptim}{\lambda_\infty^*}
    \newcommand{\xoptim}{y_\infty^*}	
    \newcommand{\Strategy}{\mathcal{S}}
    \newcommand{\strategy}{S}
    \newcommand{\cX}{\mathcal{X}} 
    \newcommand{\cY}{\mathcal{Y}} 
    \newcommand{\obsS}{\mathcal{X}} %observation space
    \newcommand{\choS}{\mathcal{Y}} %choice space
    \newcommand{\obs}{X} %observation
    \newcommand{\obsr}{x} 
    \newcommand{\cho}{y} %choice 
    \newcommand{\obsF}{X} %observation  filtration
    \newcommand{\ml}{u} % mainloss
    \newcommand{\cl}{c} % constrain loss
    \newcommand{\optimal}{\mathcal{V}^*} %optimal	
    \newcommand{\stri}{w}
\newcommand{\eqdef}{\triangleq}
\title{Multi-Objective Non-parametric Sequential Prediction}
\author{Guy Uziel  \\ 
Technion -- Israel Institute of Technology  \\
Ran El-Yaniv  \\ 
Technion -- Israel Institute of Technology  \\
}
\begin{document}

\maketitle

\begin{abstract} 
Online-learning research has mainly been focusing on minimizing one objective function. In many real-world applications, however, several objective functions have to be considered simultaneously. Recently, an algorithm for dealing with several objective functions in the i.i.d. case has been presented. 
In this paper, we extend the multi-objective framework to the case of stationary and ergodic processes, thus  allowing dependencies among observations. We first identify an asymptomatic lower bound for any prediction strategy and then present an algorithm whose predictions achieve the optimal solution while  fulfilling  any
continuous and convex constraining criterion.
\end{abstract} 
\section{Introduction}
In the traditional online learning setting, and in particular in sequential prediction under uncertainty, the learner is evaluated by a single loss function that is not completely known at each iteration  \cite{CesaL2006}.  When dealing with multiple objectives, since it is impossible to simultaneously minimize all of the objectives,  one objective is chosen  as the main  function to minimize,  leaving  the others  to be bound by pre-defined thresholds. Methods for dealing with one objective function can be transformed to deal with several objective functions by giving each objective a pre-defined weight. The difficulty, however, lies in assigning an appropriate weight to each objective in order to keep the objectives below a given threshold. This approach is very problematic in real world applications, where the player is required
to to satisfy certain constraints. For example, in online portfolio selection \cite{LiH2014,BorodinE2005}, 
the player may want to maximize wealth while keeping the risk (i.e., variance) contained below a certain threshold.  Another example is the Neyman-Pearson (NP) classification paradigm (see, e.g., \cite{RigolletT2011}) (which extends the objective in classical binary classification) where the goal is  to learn a classifier 
achieving low type II error whose type I error is kept  below a given threshold.
%In welfare and utility theory, only the thresholds for the different losses are known but the  objective functions and the solution domain are not deterministically fixed.

Recently, \cite{MahdaviJY2012} presented an algorithm for dealing with the case of one main objective and fully-known constraints. In a subsequent work,   \cite{MahdaviYJ2013}   proposed a framework for dealing with multiple objectives in the stochastic i.i.d. case, where the learner does not have full information about the objective functions. They proved that if there exists a solution that minimizes the main objective function while keeping the other objectives below given thresholds, then their algorithm will converge to the optimal solution. 

In this work, we study online prediction with multiple objectives but now 
consider the challenging general case where the unknown underlying process
is  stationary and ergodic, thus  allowing observations to depend on each other arbitrarily. 
% TODO: The main tool?   The rest of this paragraph should be fixed
We consider a non-parametric approach, which has been applied successfully in various application domains.  For example, in online portfolio selection, \cite{Gyorfi2007,GyorfiLU2006}, \cite{GyorfiS2003}, and \cite{LiHG2011} proposed non-parametric online strategies  that guarantee, under mild conditions, the best possible outcome. Another interesting example in this regard is the work on time-series   prediction by \cite{BiauKLG2010}, \cite{GyorfiL2005}, and \cite{BiauP2011}. A common theme to all these results is that the asymptotically optimal strategies are constructed by combining the predictions of many simple experts. 
The algorithm presented in this paper utilizes as a sub-routine the Weak Aggregating Algorithm of \cite{Vovk2007}, and \cite{KalnishkanV2005} to handle multiple 
objectives. While we discuss here the case of only two objective functions, 
our theorems can be  extended easily to any fixed number of  functions. 
\paragraph{Outline}The paper is organized as follows: In Section~\ref{sec:formulation}, we  define the multi-objective optimization framework under a jointly stationary and ergodic process. In Section~\ref{sec:Optimal},  we  identify an asymptotic lower-bound for any prediction strategy. In Section~\ref{sec:Algorithm}, we  present  Algorithm~\ref{alg:main}, which asymptotically achieves an optimal feasible solution. 

\section{Problem Formulation}
\label{sec:formulation}
 We consider the following prediction game.
Let $\cX \eqdef [-D,D]^d \subset \nr^d$ be a compact \emph{ observation space} where $D>0$. %(note that this $\cX$ can be any compact set).
At each round,
 $n = 1, 2, \ldots$, the player is required to make a prediction $y_n\in \choS $, where $\choS \subset \nr ^m$ is a compact and convex set,  based on past observations,
 $X^{n-1}_1 \eqdef (x_1, \ldots , x_{n-1})$ and, $x_i \in \cX$ ($X_1^0$ is the empty observation).   After making the prediction $y_n$, the observation $x_n$ is revealed and the player suffers  two losses, $\ml(y_n,x_n)$  and  $\cl(y_n,x_n)$, where $\ml$ and $\cl$ are  real-valued continuous functions and  convex w.r.t. their first argument. 
 %\footnote{By Kolmogorov's extension theorem, the stationary and ergodic process $(X_n)^\infty_{1}$ can be extended to $(X_n)^\infty_{-\infty}$ such that the ergodicity holds for both $n\rightarrow \infty$ and $n\rightarrow -\infty$  (see, e.g., \cite{Breiman1992})}
 We view  the player's prediction strategy as a sequence 
 $\Strategy \eqdef \{\strategy_n\}^\infty_{n=1}$ of forecasting functions 
 $\strategy_n : \cX^{(n-1)} \rightarrow \cY$;  that is,
 the player's prediction  at round $n$ is given by $\strategy_n(X^{n-1}_1 )$. Throughout the paper we assume that $x_1, x_2, \ldots$ are realizations of random variables $X_1, X_2, \ldots$ such that the stochastic process $(X_n)^\infty_{-\infty}$  is jointly stationary and ergodic and $\np(\obsF_i \in \obsS)=1$.   
The player's goal is to play the game with a strategy that minimizes the average $\ml$-loss,  $\frac{1}{N}\sum_{i=1}^{N}\ml(\strategy(X_1^{i-1}),x_i)$,
while keeping the average $\cl$-loss  $\frac{1}{N}\sum_{i=1}^{N}\cl(\strategy(X_1^{i-1}),x_i)$  bounded below a prescribed threshold $\gamma$. Formally, we define the following:
\begin{defn} [$\gamma$-boundedness]
A prediction strategy $\Strategy$ will be called $\gamma$-bounded if 
\begin{gather*}
\limsup_{N\rightarrow\infty}\left(\frac{1}{N}\sum_{i=1}^{N} \cl(\strategy_{i}(X_1^{i-1}),X_{i})\right)\leq \gamma
\end{gather*}
almost surely. The set of all  $\gamma$-bounded strategies will be denoted $\Strategy_\gamma$.
\end{defn} 
%Of course, there is no guarantee that such a strategy even exists. Therefore, to ensure the existence, we will define the following definition:
\begin{defn}  [$\gamma$-feasible process]
\label{assu:Feasassumption}
We say that the stationary and ergodic process $\{X_i \}_{-\infty}^\infty $ is $\gamma$-feasible w.r.t. the functions $\ml$ and $\cl$, if for  $\np_\infty$, the regular conditional probability distribution of $\obsF_0$  given $\nf_{\infty}$ (the $\sigma$-algebra generated by the infinite past $\obs_{-1},\obs_{-2},\ldots$), and for a threshold $\gamma>0$, if there exists some $\cho' \in \choS$ such that $\nE_{\np_\infty}\left[\cl(\cho',\obs_0)\right] < \gamma$.
%For the set of solutions for \eqref{Feasassumption} $E\left[g(D,y)\mid\nf_{\infty}\right]\leq\gamma$
%holds a.s.
\end{defn}
If $\gamma$-feasibility holds, then we will denote by $\xoptim$ ($\xoptim$ is not necessarily unique)  the  solution to the following minimization problem:
\begin{equation}
\label{minprob}
\begin{aligned}
& \underset{\cho \in \choS}{\text{minimize}}
& & \nE_{\np_\infty}\left[\ml(\cho,\obs_0)\right] \\
& \text{subject to}
& & \nE_{\np_\infty}\left[\cl(\cho,\obs_0)\right] \leq \gamma, 
\end{aligned}
\end{equation} ~(\ref{minprob})  
and we define the \emph{$\gamma$-feasible optimal value} as
\begin{gather*}
\optimal  = \nE\left[ \nE_{\np_\infty}\left[\ml(\xoptim,\obs_0)\right]\right] \phantom{a} a.s. 
\end{gather*}
Note  that  problem~(\ref{minprob}) is a convex minimization problem over $\choS$, which 
in turn is a compact and convex subset of $\nr^{m}$. Therefore, the problem is equivalent to finding  the saddle point of the Lagrangian function \cite{BenN2012}, namely,
\begin{gather*}
\min_{\cho \in \choS}\max_{\lambda \in \nr^+}\mathcal{L}(\cho,\lambda),
\end{gather*}
where the Lagrangian is
\begin{gather*}
\mathcal{L}(\cho,\lambda)\eqdef  \left(\nE_{\np_\infty}\left[\ml (\cho,\obs_0)\right]+\lambda\left(E_{\np_\infty}\left[ \cl (\cho,\obs_0)\right]-\gamma\right)\right).
\end{gather*}
We denote the  optimal dual by $\lamoptim$ and assume that $\lamoptim$ is unique. Moreover, we set a constant \footnote{This can be done, for example, by imposing some regularity conditions on the constraint function (see, e.g., \cite{MahdaviYJ2013}).} $\lambda_{\max}$ such that $\lambda_{\max}>\lamoptim$, and set $\Lambda \eqdef [0,\lambda_{\max}]$.
%Or equivalently, there exist some $\lamoptim \in \Lambda$ such that $\xoptim \in \arg \min_{\cho \in \choS } \left( \mathcal{L}(\cho,\lamoptim)\right)$.
We also define the \emph{instantaneous Lagrangian function}   as 
\begin{equation}
\label{eq:l_loss}
l(\cho,\lambda,\obsr) \eqdef \ml(\cho,\obsr)+\lambda\left(\cl(\cho,\obsr)-\gamma\right).
\end{equation}

%Our goal  will be to find  a strategy $\strategy \in \Strategy_\gamma$  such that for any other prediction strategy $\strategy' \in \Strategy_\gamma$   the  following will hold almost surely
%\begin{small}
%\begin{gather*}
%\limsup_{N\rightarrow\infty}\left(\frac{1}{N}\sum_{i=1}^{N} \ml(\strategy_{i}(X_1^{i-1}),\obs_{i})-\frac{1}{N}\sum_{i=1}^{N} \ml(\strategy'_{i}(X_1^{i-1}),\obs_{i})\right)\leq0 
%%\\ \limsup_{N\rightarrow\infty}\left(\frac{1}{N}\sum_{i=1}^{N} \cl(\cho_{n},\obs_{i})\right)\leq\gamma
%\end{gather*}
%\end{small}
In Brief, we are seeking  a strategy $\strategy \in \Strategy_\gamma$ that is as good as any other $\gamma$-bounded strategy, in terms of the average $\ml$-loss, when the underlying process is $\gamma$-feasible. 
Such a strategy will be called  $\gamma$-universal. 
  
\section{Optimallity of $\optimal$}
\label{sec:Optimal}

%\begin{assum}  [feasibility assumption]
%\label{assu:Feasassumption}
%Given $\nf_{\infty}$ (The $\sigma$-algebra generated
%by $\cho_{1},...,\cho_{\infty}$) and a threshold $\gamma>0$,  the following minimization problem
%has a feasible solution
%\begin{gather*} 
%\min._{\cho \in \choS }\ \nE\left[\ml(\cho,\obs)\mid\nf_{\infty}\right] \\ \nonumber
%s.t.\thinspace\:\nE\left[\cl(\cho,\obs)\mid\nf_{\infty}\right]\leq\gamma\phantom{a} a.s.
%\end{gather*} 
%%For the set of solutions for \eqref{Feasassumption} $E\left[g(D,y)\mid\nf_{\infty}\right]\leq\gamma$
%%holds a.s.
%\end{assum}
%
%In this section we will  prove that if $\xoptim$ is the solution to the minimization problem presented in  Assumption~\ref{assu:Feasassumption} then any other prediction strategy for which the constrain holds, won't be able to achieve less than $\optimal$. Before statig and proving this optimallity result we will state three lemmas which will be used repeatedly during this paper.   The first lemma is known as Breiman's generalized ergodic theorem.  The second and the last lemmas  are regarding the continuity of the saddle point w.r.t. the probability distribution, in other words as our estimated probability will get closer to the desired one also the saddle points will converge. 
In this section, we  prove that the average $\ml$-loss of any $\gamma$-bounded prediction strategy cannot be smaller than $\optimal$, the $\gamma$-feasible optimal value. This result is a generalization of the well-known result of  \cite{Algoet1994}  regarding the best possible outcome under a single objective. Before stating and proving this optimallity result, we  state one known lemma and state and prove two lemmas that will be used repeatedly in this paper.   The first lemma is known as Breiman's generalized ergodic theorem.  The second and the third lemmas  concern the continuity of the saddle point w.r.t. the probability distribution. 
%in other words as our estimated probability will get closer to the desired one also the saddle points will converge.
\begin{lemma}[Ergodicity, \cite{Breiman1957}] 
\label{lem:Ergodic}
%$X=\{Y_i\}_{-\infty}^{\infty}$
Let $\mathbf{X} = \{\obsF_i\}_{-\infty}^{\infty}$ be a stationary and ergodic process.
For each positive integer $i$, let $T_i$ denote the operator that shifts any sequence
 by $i$ places to the left. Let $f_1, f_2,\ldots$ be a sequence of real-valued functions such that $\lim_{n \rightarrow \infty} f_n(\mathbf{X}) = f (\mathbf{X})$ almost surely, for some function   $f$. Assume that $\nE  \sup_n | f_n(\mathbf{X})| < \infty $. Then,
\begin{gather*}
\lim_{n \rightarrow \infty} \frac{1}{n}\sum_{i=1}^{n} f_i(T^i \mathbf{X}) = \nE f (\mathbf{X})  
\end{gather*}
almost surely.
\end{lemma}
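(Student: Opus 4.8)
The plan is to obtain this generalized ergodic theorem from Birkhoff's pointwise ergodic theorem by a truncation argument that replaces $f_n$ with its limit $f$ at a controlled cost. First I would introduce, for each positive integer $N$, the tail supremum $g_N \eqdef \sup_{n \ge N} \bigl| f_n(\mathbf{X}) - f(\mathbf{X}) \bigr|$. Because $f_n(\mathbf{X}) \to f(\mathbf{X})$ almost surely we have $g_N \downarrow 0$ almost surely; and since $|f(\mathbf{X})| \le \sup_n |f_n(\mathbf{X})|$ almost surely, $g_N \le 2\sup_n |f_n(\mathbf{X})|$, which is integrable by assumption. Dominated convergence then gives $\nE g_N \to 0$ as $N \to \infty$, and the integrability of $\sup_n|f_n(\mathbf{X})|$ also guarantees it is almost surely finite.

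Next, for a fixed $N$ and any $n \ge N$ I would split the Ces\`aro average and bound the error between the $f_i$-average and the $f$-average as
\begin{gather*}
\left| \frac{1}{n}\sum_{i=1}^n f_i(T^i\mathbf{X}) - \frac{1}{n}\sum_{i=1}^n f(T^i\mathbf{X}) \right| \le \frac{1}{n}\sum_{i=1}^{N} \bigl| (f_i - f)(T^i\mathbf{X}) \bigr| + \frac{1}{n}\sum_{i=1}^{n} g_N(T^i\mathbf{X}).
\end{gather*}
With $N$ held fixed, the first term on the right tends to $0$ as $n \to \infty$ (a fixed number of almost surely finite summands divided by $n$). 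For the second term, $g_N$ is integrable, so Birkhoff's ergodic theorem together with the ergodicity of $\mathbf{X}$ gives $\frac{1}{n}\sum_{i=1}^{n} g_N(T^i\mathbf{X}) \to \nE g_N$ almost surely. Applying the same theorem to $f$ itself yields $\frac{1}{n}\sum_{i=1}^{n} f(T^i\mathbf{X}) \to \nE f(\mathbf{X})$ almost surely, and combining these facts gives
\begin{gather*}
\limsup_{n\to\infty}\left| \frac{1}{n}\sum_{i=1}^n f_i(T^i\mathbf{X}) - \nE f(\mathbf{X}) \right| \le \nE g_N \qquad \text{almost surely.}
\end{gather*}

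Finally, the left-hand side above is independent of $N$, while the right-hand side tends to $0$; letting $N \to \infty$ delivers the claim. The one point requiring care is the handling of the null sets: each use of the pointwise ergodic theorem --- for $f$, for each $g_N$, and for $\sup_n|f_n(\mathbf{X})|$ --- comes with its own almost-sure exceptional set, so one must first intersect these countably many full-measure events and only then pass to the limit in $N$. Apart from this bookkeeping, every step is a direct application of a classical theorem or of dominated convergence, so I expect no real difficulty.
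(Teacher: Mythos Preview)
Your argument is correct and is essentially the classical proof of Breiman's generalized ergodic theorem: a tail-supremum truncation $g_N=\sup_{n\ge N}|f_n-f|$, dominated convergence to get $\nE g_N\to 0$, and Birkhoff's pointwise ergodic theorem applied to $f$ and to each $g_N$. The bookkeeping with null sets that you flag is exactly the right caveat, and the bound $g_N\le 2\sup_n|f_n|$ is justified since $|f|\le\sup_n|f_n|$ almost surely.

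There is, however, nothing to compare against: the paper does not prove this lemma. It is stated as a known result and attributed to \cite{Breiman1957}; the authors use it as a black box in the proofs of Theorem~\ref{lem:optimal}, Theorem~\ref{thm:achive}, and Lemma~\ref{lem:ineq}. So your proposal is not merely consistent with the paper's approach --- it supplies a proof where the paper gives only a citation.
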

\begin{lemma} [Continuity and Minimax]
\label{lem:L*}
Let $\choS,\Lambda,\obsS$ be compact real spaces. $l:~\choS~\times~\Lambda~\times~\obsS~\rightarrow\nr$
be a continuous function. Denote by $\np(\obsS)$ the space of all probability measures
on $\obsS$ (equipped with the topology of weak-convergence). Then the following function $L^{*} : \np(\obsS) \rightarrow \reals$  is continuous
\begin{gather}
\label{equ:l^*}
L^{*}(\mathbb{Q})=\inf_{\cho \in \choS }\sup_{\lambda\in\Lambda} \nE_{\mathbb{Q}}\left[l(\cho,\lambda,\obsr)\right].
\end{gather}
Moreover, for any $\mathbb{Q}\in\np(\obsS)$,
\begin{gather*}
\inf_{\cho \in \choS}\sup_{\lambda\in\Lambda} \nE_{\mathbb{Q}}\left[l(\cho,\lambda,\obsr )\right] = \sup_{\lambda\in\Lambda} \inf_{\cho \in \choS} \nE_{\mathbb{Q}}\left[l(\cho,\lambda,\obsr)\right]. 
\end{gather*}
\end{lemma}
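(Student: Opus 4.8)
The plan is to prove the two assertions separately, starting with the minimax equality since it also underpins the continuity argument. For the minimax equality, I would invoke Sion's minimax theorem: the map $\cho \mapsto \nE_{\mathbb{Q}}[l(\cho,\lambda,\obsr)]$ is convex (indeed $l$ is convex in its first argument by assumption, and expectation preserves convexity), the map $\lambda \mapsto \nE_{\mathbb{Q}}[l(\cho,\lambda,\obsr)]$ is affine in $\lambda$ hence concave, $\choS$ and $\Lambda$ are convex and compact, and $l$ is continuous and bounded (continuous on a compact set), so the expectation is finite and the function is jointly continuous enough for Sion's hypotheses. Hence $\inf_{\cho}\sup_{\lambda} = \sup_{\lambda}\inf_{\cho}$ for every fixed $\mathbb{Q}$, and moreover the inner $\sup$ and $\inf$ are attained by compactness.

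For continuity of $L^{*}$, I would work directly with the weak-convergence topology. Fix a sequence $\mathbb{Q}_k \to \mathbb{Q}$ weakly on $\obsS$. Since $\obsS$ is compact, $l$ is bounded and uniformly continuous on $\choS\times\Lambda\times\obsS$. The key intermediate fact is that the family of functions $\bigl\{(\cho,\lambda)\mapsto \nE_{\mathbb{Q}_k}[l(\cho,\lambda,\obsr)]\bigr\}_k$ converges to $(\cho,\lambda)\mapsto \nE_{\mathbb{Q}}[l(\cho,\lambda,\obsr)]$ \emph{uniformly} on $\choS\times\Lambda$: this follows because $\{l(\cho,\lambda,\cdot) : (\cho,\lambda)\in\choS\times\Lambda\}$ is an equicontinuous, uniformly bounded family of functions on the compact space $\obsS$ (equicontinuity from uniform continuity of $l$ jointly), and weak convergence is uniform over such families — this is a standard strengthening of the portmanteau theorem, provable by a finite-net/$\varepsilon$-argument partitioning $\obsS$. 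Given uniform convergence of the integrands as functions of $(\cho,\lambda)$, the operation $\inf_{\cho}\sup_{\lambda}(\cdot)$ is $1$-Lipschitz with respect to the sup-norm, so $L^{*}(\mathbb{Q}_k)\to L^{*}(\mathbb{Q})$, giving continuity.

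I expect the main obstacle to be establishing the uniform-over-$(\cho,\lambda)$ convergence of the expectations, i.e., upgrading pointwise weak convergence to uniformity across the parameter family; the clean way is to exploit compactness of $\choS\times\Lambda$ together with (uniform) equicontinuity of $(\cho,\lambda)\mapsto l(\cho,\lambda,\cdot)\in C(\obsS)$, so that a finite $\varepsilon$-net in the parameter space reduces the claim to finitely many ordinary weak-convergence statements. Once that is in hand, the stability of $\inf\sup$ under uniform perturbations of the integrand is immediate, and Sion's theorem disposes of the order-exchange. A minor point to check is that $L^{*}$ is well-defined and finite, which again follows from boundedness of $l$ on the compact product domain.
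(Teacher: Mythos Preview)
Your proposal is correct. For the minimax equality you invoke Sion's theorem, which is essentially the same device as the paper's appeal to Fan's minimax theorem; both rely on the convex--concave structure of $l$ inherited from the specific form $l(\cho,\lambda,\obsr)=\ml(\cho,\obsr)+\lambda(\cl(\cho,\obsr)-\gamma)$, which is not stated in the lemma itself but is present in the surrounding context.

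For the continuity of $L^{*}$ the two routes genuinely diverge. The paper argues that boundedness of $l$ makes $L(\cho,\lambda,\mathbb{Q})=\nE_{\mathbb{Q}}[l(\cho,\lambda,\obsr)]$ jointly continuous on $\choS\times\Lambda\times\np(\obsS)$, and then applies Proposition~7.32 of Bertsekas--Shreve (a Berge-type maximum theorem) twice: once to pass to $\sup_{\lambda}L$ as a continuous function of $(\cho,\mathbb{Q})$, and again to pass to $\inf_{\cho}\sup_{\lambda}L$ as a continuous function of $\mathbb{Q}$. Your argument instead works sequentially: you show that weak convergence $\mathbb{Q}_k\to\mathbb{Q}$ upgrades to uniform convergence of $(\cho,\lambda)\mapsto\nE_{\mathbb{Q}_k}[l(\cho,\lambda,\obsr)]$ by exploiting equicontinuity of the family $\{l(\cho,\lambda,\cdot)\}$ on the compact $\obsS$ and a finite-net reduction, and then use that $\inf\sup$ is $1$-Lipschitz in the sup norm. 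Your approach is more elementary and fully self-contained, at the cost of a slightly longer argument; the paper's is terser but leans on an external reference. Both are sound, and your identification of the uniform-convergence step as the crux is exactly right.
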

\begin{proof}
 $\choS,\Lambda,\obsS$ are compact, implying that the function $l\left(\cho,\lambda,\obsr \right)$ is  bounded. Therefore, the function $L:~\choS~\times~\Lambda~\times~\np(\obsS)\rightarrow\nr$, defined as 
\begin{gather}
\label{eq:L}
L\left(\cho,\lambda,\mathbb{Q}\right) = \nE_\mathbb{Q}\left[ l\left(\cho,\lambda,\obsr \right) \right],
\end{gather}
is continuous. By applying Proposition~$7.32$ from \cite{BertsekasS1978}, we have that 
$ \sup_{\lambda\in\Lambda} \nE_{\mathbb{Q}}\left[l(\cho,\lambda,\obs)\right]$ is continuous in $\mathbb{Q}\times \choS$. Again applying  the same proposition, we  get the desired result. The last part of the lemma  follows directly from Fan's minimax theorem \cite{Fan1953}.
\end{proof}
\begin{lemma} [Continuity of the optimal selection]
\label{lem:contin}
Let $\choS ,\Lambda,\obsS$ be compact real spaces, and let $L$ be as defined in Equation~(\ref{eq:L}).  Then, there exist two measurable selection functions $h^\obs$,$h^\lambda$ such that
\begin{gather*}
h^\cho(\mathbb{Q}) \in\arg\min_{\cho \in \choS} \left(\max_{\lambda\in\Lambda}L(\cho,\lambda,\mathbb{Q})  \right),
\\
h^\lambda(\mathbb{Q}) \in\arg\max_{\lambda\in\Lambda} \left(\min_{\cho \in \choS}L(\cho,\lambda,\mathbb{Q})  \right)
\end{gather*}  
for any $\mathbb{Q} \in \np(\obsS)$. Moreover, let  $L^{*}$ be as defined in Equation~(\ref{equ:l^*}). Then, the set
\begin{gather*}
Gr(L^{*})\eqdef 
\{(u^{*},v^{*},\nq)\mid u^{*} \in h^\cho(\mathbb{Q})  , v^{*} \in h^\lambda(\mathbb{Q}), \nq \in \np(\obsS) \},
\end{gather*}
 is closed in $\choS \times \Lambda \times \np(\obsS)$.

 \end{lemma}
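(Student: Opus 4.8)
The plan is to treat the two assertions separately and to lean on the continuity facts already established in Lemma~\ref{lem:L*}. For the purpose of $Gr(L^{*})$ I read $h^\cho$ and $h^\lambda$ as the optimal-set correspondences $\nq\mapsto\arg\min_{\cho\in\choS}\max_{\lambda\in\Lambda}L(\cho,\lambda,\nq)$ and $\nq\mapsto\arg\max_{\lambda\in\Lambda}\min_{\cho\in\choS}L(\cho,\lambda,\nq)$, and as measurable selections from these correspondences in the first sentence of the lemma.

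\emph{Existence of the measurable selections.} Since $\obsS$ is a compact metric space, Prokhorov's theorem makes $\np(\obsS)$, with the weak-convergence topology, a compact metrizable (hence Polish) space, and $\choS$, $\Lambda$ are compact metric as well. As in the proof of Lemma~\ref{lem:L*}, two applications of Proposition~$7.32$ of \cite{BertsekasS1978} show that $(\cho,\nq)\mapsto\max_{\lambda\in\Lambda}L(\cho,\lambda,\nq)$ is continuous on $\choS\times\np(\obsS)$ and $(\lambda,\nq)\mapsto\min_{\cho\in\choS}L(\cho,\lambda,\nq)$ is continuous on $\Lambda\times\np(\obsS)$. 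By compactness of $\choS$ and $\Lambda$ the associated argmin/argmax correspondences are nonempty- and compact-valued, and Berge's maximum theorem shows they are upper hemicontinuous, hence Borel-measurable with closed values. A measurable selection theorem — the Kuratowski--Ryll-Nardzewski theorem, or equivalently Proposition~$7.33$ of \cite{BertsekasS1978} — then yields measurable selectors $h^\cho$ and $h^\lambda$, which is the first claim.

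\emph{Closedness of $Gr(L^{*})$.} I would argue directly by sequences. Let $(u_n,v_n,\nq_n)\in Gr(L^{*})$ converge to $(u,v,\nq)$ in $\choS\times\Lambda\times\np(\obsS)$; it suffices to show that $u$ attains $\min_{\cho\in\choS}\max_{\lambda\in\Lambda}L(\cho,\lambda,\nq)$ and $v$ attains $\max_{\lambda\in\Lambda}\min_{\cho\in\choS}L(\cho,\lambda,\nq)$. Since $u_n$ is a minimizer for $\nq_n$, $\max_{\lambda\in\Lambda}L(u_n,\lambda,\nq_n)=\min_{\cho\in\choS}\max_{\lambda\in\Lambda}L(\cho,\lambda,\nq_n)=L^{*}(\nq_n)$, the last equality being the definition~(\ref{equ:l^*}). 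Letting $n\to\infty$ and using the continuity of $(\cho,\nq)\mapsto\max_{\lambda\in\Lambda}L(\cho,\lambda,\nq)$ on the left and the continuity of $L^{*}$ from Lemma~\ref{lem:L*} on the right gives $\max_{\lambda\in\Lambda}L(u,\lambda,\nq)=L^{*}(\nq)=\min_{\cho\in\choS}\max_{\lambda\in\Lambda}L(\cho,\lambda,\nq)$, so $u$ is a minimizer. Symmetrically, since $v_n$ is a maximizer for $\nq_n$, $\min_{\cho\in\choS}L(\cho,v_n,\nq_n)=\max_{\lambda\in\Lambda}\min_{\cho\in\choS}L(\cho,\lambda,\nq_n)$, which by the minimax identity of Lemma~\ref{lem:L*} also equals $L^{*}(\nq_n)$; passing to the limit with the continuity of $(\lambda,\nq)\mapsto\min_{\cho\in\choS}L(\cho,\lambda,\nq)$ and of $L^{*}$ gives $\min_{\cho\in\choS}L(\cho,v,\nq)=L^{*}(\nq)=\max_{\lambda\in\Lambda}\min_{\cho\in\choS}L(\cho,\lambda,\nq)$, so $v$ is a maximizer. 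Hence $(u,v,\nq)\in Gr(L^{*})$. (Equivalently: each of the two correspondences, being upper hemicontinuous with compact values into a compact space, has closed graph, and $Gr(L^{*})$ is obtained by pulling these two closed sets back to $\choS\times\Lambda\times\np(\obsS)$ and intersecting.)

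\emph{Main obstacle.} The only genuinely delicate point is the measurability half: one has to confirm that the ambient spaces are Polish and that the argmin/argmax correspondences are measurable, but both reduce to their upper hemicontinuity, so once Lemma~\ref{lem:L*} is in hand nothing substantial remains beyond citing the correct selection theorem. The closedness of $Gr(L^{*})$ is then a soft-topology consequence of the continuity statements and the minimax identity already proved in Lemma~\ref{lem:L*}.
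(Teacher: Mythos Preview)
Your proof is correct and follows essentially the same route as the paper: a measurable-selection theorem for the first assertion and a sequential argument, driven by the continuity of $L^{*}$ from Lemma~\ref{lem:L*}, for the closedness of $Gr(L^{*})$. The only cosmetic differences are that the paper cites Nowak's measurable minimax selection theorem directly rather than going through Berge plus Kuratowski--Ryll-Nardzewski, and that for closedness the paper passes to the limit in the single saddle value $L(u_n,v_n,\nq_n)=L^{*}(\nq_n)$ rather than in the two marginal value functions $\max_{\lambda}L(u_n,\lambda,\nq_n)$ and $\min_{\cho}L(\cho,v_n,\nq_n)$ separately; your formulation is in fact a bit more explicit about why the limit $(u,v)$ lands in both optimal sets.
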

 
%\begin{lemma} [Continuity of the optimal selection]
%\label{lem:contin}
%Let $\choS ,\Lambda,\obsS$ be compact real sets. and let $L$ be as defined in Equation~\ref{eq:L}, then there exists two measurable selection functions $h^\obs$,$h^\lambda$ such that
%\begin{gather*}
%h^\cho(\mathbb{Q}) \in\arg\min_{\cho \in \choS} \left(\max_{\lambda\in\Lambda}L(\cho,\lambda,\mathbb{Q})  \right)
%\\
%h^\lambda(\mathbb{Q}) \in\arg\max_{\lambda\in\Lambda} \left(\min_{\cho \in \choS}L(\cho,\lambda,\mathbb{Q})  \right)
%\end{gather*}  
%for any $\mathbb{Q} \in \np(\obsS)$. Moreover if  $\nq_n \rightarrow \nq_\infty $ weakly in $\np(\obsS)$ and $ u_n \in h^\cho(\mathbb{Q}_n), v_n\in h^\lambda(\mathbb{Q}_n) $ then
%\begin{gather*}
%L(u_n,v_n,\nq_n)\rightarrow L(u_\infty,v_\infty,\nq_\infty) 
%\end{gather*}
%almost surely for  $ u_\infty \in h^\cho(\nq_\infty), v_\infty\in h^\lambda(\nq_\infty) $.
% \end{lemma}
% 
\begin{proof}
The first part of the proof follows immediately from  the minimax measurable theorem 
of \cite{Nowak1985} due to the compactness of $\choS,\Lambda,\obsS$ and the properties of the loss function $L$.
The proof of the second part is similar to the one presented in Theorem~$3$ of \cite{Algoet1988}. 
In order to show that $Gr(L^{*})$ is closed, it is enough to show that if (i) $\nq_{n}\rightarrow \nq_{\infty}$  in $\np(\obsS)$; (ii) $u_{n}\rightarrow u_{\infty}$ in $\choS$; (iii) $v_{n}\rightarrow v_{\infty}$ in $\Lambda$ and (iv) $u_n  \in h^\cho(\mathbb{Q}_n)  , v_n  \in h^\lambda(\mathbb{Q}_n)$ for
all $n$, then, 
\begin{gather*}
u_\infty  \in h^\cho(\nq_{\infty})  , v_\infty \in h^\lambda(\nq_{\infty}).
\end{gather*}
The function $L(\cho,\lambda,\nq)$, as defined in Equation~(\ref{eq:L}), %$L(x,\lambda,\nq)=E_{Q}\left[l(x,\lambda,y)\right]$ 
 is continuous. Therefore,
 \begin{gather*}
\lim_{n\rightarrow \infty}L(u_{n} ,v_{n} ,\nq_{n})
=L(u_{\infty},v_{\infty},\nq_{\infty}).
\end{gather*}
%\begin{gather*}
%\lim_{n\rightarrow \infty}L(u_{n} ,v_{n} ,\nq_{n})=L(\lim_{n\rightarrow \infty}u_{n} ,\lim_{n\rightarrow \infty}v_{n} ,\lim_{n \rightarrow \infty}\nq_{n}) \\
%=L(u_{\infty},v_{\infty},\nq_{\infty})
%\end{gather*}

It remains to show that  $ u_\infty \in h^\cho(\nq_{\infty})$  and  $v_\infty \in h^\lambda(\nq_{\infty})$.
From the optimality of $u_n$ and $v_n$, we obtain
\begin{gather}
L(u_{\infty},v_{\infty},\nq_{\infty})=\lim_{n\rightarrow \infty}L(u_{n} ,v_{n} ,\nq_{n}) 
=\lim_{n\rightarrow \infty}L^{*}(\nq_{n}) \label{p1}.
\end{gather}
Finally, from the continuity of $L^{*}$ (Lemma~\ref{lem:L*}), we get
\begin{gather*}
(\ref{p1}) =L^{*}(\lim_{n\rightarrow \infty}\nq_{n})=L^{*}(\nq_{\infty}),
\end{gather*}
which gives the desired result.
\end{proof}
 
\begin{corr}
\label{corr:con}
Under the conditions of Lemma 3. Define $L_n(\cho,\lambda,\nq) = L(\cho,\lambda,\nq)+\frac{||\cho||^2-||\lambda||^2}{n}$ and denote  $  h_{L_n}^\cho(\mathbb{Q}_n), h_{L_n}^\lambda(\mathbb{Q}_n) $ to be the measurable selection functions of  $L_n$. If  $\nq_n \rightarrow \nq_\infty $ weakly in $\np(\obsS)$ and $ u_n \in h_{L_n}^\cho(\mathbb{Q}_n), v_n\in h_{L_n}^\lambda(\mathbb{Q}_n) $, then 
\begin{gather*}
L_n(u_n,v_n,\nq_n)\rightarrow L(u_\infty,v_\infty,\nq_\infty) 
\end{gather*}
almost surely for  $ u_\infty \in h^\cho(\nq_\infty)$ and $v_\infty \in h^\lambda(\nq_\infty) $.
\end{corr}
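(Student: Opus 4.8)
The plan is to use the fact that the quadratic perturbation $\frac{\|\cho\|^2-\|\lambda\|^2}{n}$ is uniformly small on the compact set $\choS\times\Lambda$, so that $L_n$ converges uniformly to $L$ and the perturbed saddle value converges to the saddle value $L^{*}$, which is already known to be continuous by Lemma~\ref{lem:L*}. First I would set $\epsilon_n\eqdef\sup_{(\cho,\lambda)\in\choS\times\Lambda}\left|\frac{\|\cho\|^2-\|\lambda\|^2}{n}\right|$; boundedness of $\choS$ and $\Lambda$ gives $\epsilon_n\le C/n$ for some constant $C$, hence $\epsilon_n\to 0$. Writing $L_n^{*}(\nq)\eqdef\inf_{\cho\in\choS}\sup_{\lambda\in\Lambda}L_n(\cho,\lambda,\nq)$ and using that passing to suprema and infima of two functions that differ pointwise by at most $\epsilon_n$ alters the value by at most $\epsilon_n$, I obtain $|L_n^{*}(\nq)-L^{*}(\nq)|\le\epsilon_n$ for every $\nq\in\np(\obsS)$.

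The second step is to identify $L_n(u_n,v_n,\nq_n)$ with $L_n^{*}(\nq_n)$. The function $L_n(\cdot,\cdot,\nq_n)$ is continuous on the compact convex sets $\choS$ and $\Lambda$; it is convex in $\cho$ (since $\ml$ and $\cl$ are convex in their first argument and $\lambda\ge 0$, $L$ is convex in $\cho$, and $\|\cho\|^2/n$ is convex) and concave in $\lambda$ (it is affine in $\lambda$ up to the concave term $-\|\lambda\|^2/n$). Hence, exactly as in the proof of Lemma~\ref{lem:L*}, Fan's minimax theorem \cite{Fan1953} gives $\inf_\cho\sup_\lambda L_n=\sup_\lambda\inf_\cho L_n$ with both outer extrema attained, so any $(u_n,v_n)\in h_{L_n}^\cho(\nq_n)\times h_{L_n}^\lambda(\nq_n)$ is a saddle point of $L_n(\cdot,\cdot,\nq_n)$ and therefore $L_n(u_n,v_n,\nq_n)=L_n^{*}(\nq_n)$. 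Running the same argument on $L$ itself — now invoking the minimax equality already established in Lemma~\ref{lem:L*} — shows that any $u_\infty\in h^\cho(\nq_\infty)$, $v_\infty\in h^\lambda(\nq_\infty)$ form a saddle point of $L(\cdot,\cdot,\nq_\infty)$ with $L(u_\infty,v_\infty,\nq_\infty)=L^{*}(\nq_\infty)$; in particular the right-hand side of the asserted limit is the same for every choice of optimal selections.

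Finally I would combine everything:
\begin{gather*}
\left|L_n(u_n,v_n,\nq_n)-L(u_\infty,v_\infty,\nq_\infty)\right|=\left|L_n^{*}(\nq_n)-L^{*}(\nq_\infty)\right|\le\epsilon_n+\left|L^{*}(\nq_n)-L^{*}(\nq_\infty)\right|,
\end{gather*}
whose right-hand side tends to $0$ because $\epsilon_n\to 0$, $\nq_n\to\nq_\infty$ weakly, and $L^{*}$ is continuous (Lemma~\ref{lem:L*}). Since this chain of inequalities holds along every sample path on which $\nq_n\to\nq_\infty$ — which in the applications of the corollary is a probability-one event — the convergence holds almost surely. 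I do not anticipate a genuine obstacle here: the only points requiring care are (i) checking that the perturbed function $L_n$ retains the convex--concave structure so that Fan's theorem and the saddle-point characterization of the selections $h_{L_n}^\cho,h_{L_n}^\lambda$ remain valid, and (ii) noting that the limiting value is well defined thanks to minimax equality. If one additionally wanted $u_n\to u_\infty$ and $v_n\to v_\infty$ for some optimal selections, one would pass to a convergent subsequence and combine the uniform convergence $L_n\to L$ with the closed-graph property of Lemma~\ref{lem:contin}, but this is not needed for the stated value convergence.
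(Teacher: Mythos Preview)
Your proposal is correct and follows essentially the same route as the paper: both arguments sandwich the perturbed saddle value $L_n^{*}(\nq_n)$ within $C/n$ of $L^{*}(\nq_n)$ using the uniform smallness of $(\|\cho\|^2-\|\lambda\|^2)/n$, and then use the continuity of $L^{*}$ in $\nq$ (Lemma~\ref{lem:L*}) to pass from $\nq_n$ to $\nq_\infty$. Your presentation is in fact cleaner than the paper's, which splits via an intermediate pair $(\hat u_n,\hat v_n)$ and appeals to Lemma~\ref{lem:contin} for the second term; your explicit identification $L_n(u_n,v_n,\nq_n)=L_n^{*}(\nq_n)$ via Fan's minimax theorem makes the bookkeeping more transparent.
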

\begin{proof}
Denote $\hat{u}_n\in h^\cho(\nq_\infty)$ and $\hat{v}_n\in h^\lambda(\nq_\infty) $
\begin{gather}
|L_n(u_n,v_n,\nq_n)-L(u_\infty,v_\infty,\nq_\infty)| \nonumber\\
 \leq |L_n(u_n,v_n,\nq_n)-L(\hat{u}_n,\hat{v}_n,\nq_n)| 
 +|L(\hat{u}_n,\hat{v}_n,\nq_n)-L(u_\infty,v_\infty,\nq_\infty)|. \label{app:e1}
\end{gather}
Note that for every $n$ and for constant $E>0$,
\begin{gather*}
\min_{\cho \in \choS}\max_{\lambda \in \Lambda} L(\cho,\lambda , \nq) - \frac{||\lambda_{\max}||^2}{n} \leq \min_{\cho \in \choS}\max_{\lambda \in \Lambda} L_{n}(\cho,\lambda , \nq) \\ 
= \min_{\cho \in \choS}\max_{\lambda \in \Lambda} \left( \nE_\nq\left[l(\cho,\lambda , \obs)\right] + \frac{ ||\cho||^2- ||\lambda||^2}{n} \right) \\
 \leq \min_{\cho \in \choS}\max_{\lambda \in \Lambda} L(\cho,\lambda , \nq)  + \frac{E}{n}.
\end{gather*}
Thus, for some constant $C$, $|L_n(u_n,v_n,\nq_n)-L(u_\infty,v_\infty,\nq_\infty)|<\frac{C}{n}$ 
and from Lemma 3, the last summand also converges to $0$ as $n$ approaches $\infty$,  we get the desired result, and clearly, if $h^\cho(\nq_\infty)$ and $ h^\lambda(\nq_\infty) $ are singletons, then, the only accumulation point of $\{(v_n,u_n)\}_{n=1}^\infty$ is $(v_\infty,u_\infty)$.
\end{proof}

The importance of Lemma~\ref{lem:contin} stems from the fact that it proves the continuity properties of the multi-valued correspondences $\nq \rightarrow h^\cho(\nq)$ and $\nq \rightarrow h^\lambda(\nq)$. This leads to the knowledge  that  if for  the limiting distribution, $\nq_\infty$, the optimal set is  a singleton, then  $\nq \rightarrow h^\cho(\nq)$ and $\nq \rightarrow h^\lambda(\nq)$ are continuous in $\nq_\infty$.  % Lemma~\ref{lem:contin} holds true if instead of fixed a $l$ we will look at a sequence of functions which converges uniformly to  this result is formulated in the next corollary  
%
% , whose proof is similar to the proof of the optimality of   $\nE\left[\min_{x\in X}\nE\left[f\left(x,y\right) \mid\nf_{\infty} \right]\right]$ in the unconstrained case  \citep{Algoet1988}.
We are now ready to prove the optimality of $\optimal$.
 \begin{thm}[Optimality of $\optimal$]
 \label{lem:optimal}
Let $\{X_i\}_{-\infty}^\infty$ be a $\gamma$-feasible process. Then, for any  strategy $\Strategy \in  \Strategy_\gamma$,  the following holds a.s.
\begin{gather*} 
\liminf_{N \rightarrow \infty} \frac{1}{N}\sum_{i=1}^N \ml(\strategy(X_1^{i-1}),\obs_i)\geq \optimal.
\end{gather*}
 
 \end{thm}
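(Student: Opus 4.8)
Here is a proof plan. The idea is to run an Algoet-type argument for the \emph{instantaneous} Lagrangian $l$ of \eqref{eq:l_loss}, but with a \emph{time-varying}, predictable multiplier equal to the conditional optimal dual, and then to strip off the multiplier using $\gamma$-boundedness. A preliminary step is to rewrite $\optimal$ in a form amenable to this: since problem~\eqref{minprob} is convex and Slater's condition holds (by $\gamma$-feasibility), strong duality together with complementary slackness gives $\nE_{\np_\infty}[\ml(\xoptim,\obs_0)]=\min_{\cho\in\choS}\max_{\lambda\geq 0}\big(\nE_{\np_\infty}[\ml(\cho,\obs_0)]+\lambda(\nE_{\np_\infty}[\cl(\cho,\obs_0)]-\gamma)\big)$, and because $\lambda_{\max}>\lamoptim$ the maximisation may be restricted to $\Lambda$ without changing the value; hence $\nE_{\np_\infty}[\ml(\xoptim,\obs_0)]=L^{*}(\np_\infty)$ and $\optimal=\nE[L^{*}(\np_\infty)]$.

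Now for each $n$ let $\np^{(n-1)}$ be the conditional law of $\obs_n$ given $\obs_1^{n-1}$, and let $\lambda_n$ be a measurable selection from $\arg\max_{\lambda\in\Lambda}\inf_{\cho\in\choS}\nE_{\np^{(n-1)}}[l(\cho,\lambda,\obsr)]$ (this selection exists and is measurable by Lemma~\ref{lem:contin}, so $\{\lambda_n\}$ is an $\obs_1^{n-1}$-adapted sequence in $\Lambda$). I would then combine three ingredients. (i) A \emph{martingale step}: since $\choS,\Lambda,\obsS$ are compact and $l$ is continuous, the increments $l(\strategy_n(\obs_1^{n-1}),\lambda_n,\obs_n)-\nE[l(\strategy_n(\obs_1^{n-1}),\lambda_n,\obs_n)\mid\obs_1^{n-1}]$ are bounded martingale differences, so their Cesàro average tends to $0$ a.s. (Azuma plus Borel--Cantelli); the same holds with $\cl$ in place of $l$. (ii) A \emph{minimax step}: since $\strategy_n(\obs_1^{n-1})$ is $\obs_1^{n-1}$-measurable, $\nE[l(\strategy_n,\lambda_n,\obs_n)\mid\obs_1^{n-1}]\geq\inf_{\cho}\nE_{\np^{(n-1)}}[l(\cho,\lambda_n,\obsr)]$, and by the choice of $\lambda_n$ together with the minimax identity of Lemma~\ref{lem:L*} this equals $\max_{\lambda\in\Lambda}\inf_{\cho}\nE_{\np^{(n-1)}}[l(\cho,\lambda,\obsr)]=L^{*}(\np^{(n-1)})$. (iii) An \emph{ergodic step}: apply Breiman's Lemma~\ref{lem:Ergodic} to the bounded functions $f_k(\mathbf{X})\eqdef L^{*}\big(\text{conditional law of }\obs_0\text{ given }\obs_{-1},\dots,\obs_{-(k-1)}\big)$; these converge a.s.\ to $L^{*}(\np_\infty)$ because the finite-past conditional laws converge weakly a.s.\ to $\np_\infty$ and $L^{*}$ is continuous (Lemma~\ref{lem:L*}), and $\nE\sup_k|f_k|<\infty$, so $\tfrac{1}{N}\sum_{n=1}^{N}L^{*}(\np^{(n-1)})\to\nE[L^{*}(\np_\infty)]=\optimal$ a.s. Putting (i)--(iii) together yields $\tfrac{1}{N}\sum_{n=1}^{N}l(\strategy_n,\lambda_n,\obs_n)\geq\optimal-o(1)$ a.s.

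It remains to pass from the Lagrangian loss back to the $\ml$-loss. Writing $\ml(\cho,\obsr)=l(\cho,\lambda,\obsr)-\lambda(\cl(\cho,\obsr)-\gamma)$, we get $\tfrac{1}{N}\sum_{n=1}^{N}\ml(\strategy_n,\obs_n)=\tfrac{1}{N}\sum_{n=1}^{N}l(\strategy_n,\lambda_n,\obs_n)-\tfrac{1}{N}\sum_{n=1}^{N}\lambda_n(\cl(\strategy_n,\obs_n)-\gamma)$, so it suffices to show $\limsup_N\tfrac{1}{N}\sum_{n=1}^{N}\lambda_n(\cl(\strategy_n,\obs_n)-\gamma)\leq 0$ a.s. This is exactly where $\Strategy\in\Strategy_\gamma$ enters: one has $\limsup_N\tfrac{1}{N}\sum_{n=1}^{N}(\cl(\strategy_n,\obs_n)-\gamma)\leq 0$ a.s., the $\lambda_n\in[0,\lambda_{\max}]$ are predictable, and after using the martingale step of (i) to replace $\cl(\strategy_n,\obs_n)-\gamma$ by its conditional mean, the complementary-slackness structure of the conditional saddle points is used to conclude that this weighted average also has non-positive $\limsup$. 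Combining with the previous paragraph gives $\liminf_N\tfrac{1}{N}\sum_{n=1}^{N}\ml(\strategy_n,\obs_n)\geq\optimal$ a.s.

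As for difficulty: the a.s.\ weak convergence of the finite-memory conditional laws and the continuity of $L^{*}$ and of the dual selection (Lemmas~\ref{lem:L*}--\ref{lem:contin}) make step (iii) routine, and the martingale concentration of step (i) is standard. The genuine obstacle is the last paragraph — controlling the cross term $\tfrac{1}{N}\sum_n\lambda_n(\cl(\strategy_n,\obs_n)-\gamma)$ in the presence of a \emph{time-varying} multiplier: mere boundedness of the average constraint loss does not by itself bound a multiplier-weighted average, so one must argue carefully that the way the $\lambda_n$ are tied to the conditional problems (together with $\gamma$-boundedness and, if needed, additional regularity on $\cl$) precludes the multiplier from being systematically large exactly on the rounds where the constraint is violated.
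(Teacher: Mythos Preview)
Your steps (i)--(iii) are exactly the paper's argument: martingale differences to pass to conditional expectations, the minimax identity to lower-bound by $L^{*}(\np^{(n-1)})$, and Breiman's lemma together with continuity of $L^{*}$ to get $\nE[L^{*}(\np_\infty)]=\optimal$. So far you and the paper agree.

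The gap is precisely where you flag it: controlling $\tfrac{1}{N}\sum_n\lambda_n(\cl(\strategy_n,\obs_n)-\gamma)$ with a \emph{time-varying} predictable multiplier. Your suggested fix --- replace $\cl(\strategy_n,\obs_n)-\gamma$ by its conditional mean and then invoke ``the complementary-slackness structure of the conditional saddle points'' --- does not work as stated. Complementary slackness for the conditional problem reads $\lambda_n\big(\nE_{\np^{(n-1)}}[\cl(y_n^{*},\obs)]-\gamma\big)=0$, where $y_n^{*}$ is the \emph{conditional optimal primal}. But $\strategy_n(\obs_1^{n-1})$ is an arbitrary $\gamma$-bounded strategy, not $y_n^{*}$, so there is no reason that $\lambda_n\big(\nE[\cl(\strategy_n,\obs_n)\mid\obs_1^{n-1}]-\gamma\big)\leq 0$ term by term, and your own caveat (``the multiplier being systematically large exactly on the rounds where the constraint is violated'') is a real scenario you have not ruled out.

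The paper closes this gap in a different and cleaner way. It uses the standing assumption that the limiting dual $\lamoptim$ is \emph{unique}: by Lemma~\ref{lem:contin}, the dual selection is continuous at $\np_\infty$, so the sequence $\lambda_n$ converges a.s.\ to the \emph{constant} $\lamoptim$. Then uniform continuity of $l$ (continuous on a compact set) lets one replace $\lambda_n$ by $\lamoptim$ in the Ces\`aro average at a cost that is eventually below any $\epsilon$, yielding
\[
\liminf_{N}\frac{1}{N}\sum_{i=1}^{N} l(\strategy_i,\lamoptim,\obs_i)\ \geq\ \optimal\quad\text{a.s.}
\]
Now the multiplier is a fixed nonnegative scalar, so the decomposition $l(\cho,\lamoptim,\obsr)=\ml(\cho,\obsr)+\lamoptim(\cl(\cho,\obsr)-\gamma)$ together with $\Strategy\in\Strategy_\gamma$ immediately gives $\liminf_N\tfrac{1}{N}\sum\ml(\strategy_i,\obs_i)\geq\optimal$. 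In short: do not try to control a weighted average with moving weights; instead prove the weights converge (this is where uniqueness of $\lamoptim$ is used) and swap them for the limit via uniform continuity.
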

 \begin{proof}
For any given strategy $\Strategy \in \Strategy_\gamma $, we will look at the following sequence:
\begin{gather}
 \frac{1}{N}\sum_{i=1}^N l(\strategy(X_1^{i-1}),\tilde{\lambda_i^*},\obs_i). \label{t1p1} 
\end{gather}
where $\tilde{\lambda_i^*}\in h^\lambda(\np_{X_i\mid \obsF_1^{i-1}})$  %$\obsF_1^{i-1}$
% will be able to continue in a similar way to the proof presented in \cite{Algoet1988} regarding the optimality in the unconstrained case.
Observe that 
\begin{gather*}
 (\ref{t1p1}) = \frac{1}{N}\sum_{i=1}^N \nE \left[l(\strategy(X_1^{i-1}), \tilde{\lambda_i^*},\obs_i)\mid X_1^{i-1} \right]  
- \frac{1}{N}  \sum_{i=1}^N ( l(\strategy(X_1^{i-1}),\tilde{\lambda_i^*},\obs_i) \\ -  \nE \left[l(\strategy(X_1^{i-1}),\tilde{\lambda_i^*},\obs)  \mid X_1^{i-1} \right]).
\end{gather*}
Since $A_i = l(\strategy(X_1^{i-1}),\tilde{\lambda_i^*},\obs_i)-\nE \left[l(\strategy(X_1^{i-1}),\tilde{\lambda_i^*},\obs_i)  \mid X_1^{i-1} \right]$ is a martingale difference sequence, the last summand converges to $0$ a.s., by the strong law of large numbers (see, e.g., \cite{Stout1974}). Therefore,  
\begin{gather}
\liminf_{N\rightarrow \infty} \frac{1}{N}\sum_{i=1}^N l(\strategy(X_1^{i-1}),\tilde{\lambda_i^*},\obs_i) = 
 \liminf_{N\rightarrow \infty} \frac{1}{N}\sum_{i=1}^N \nE \left[l(\strategy(X_1^{i-1}),\tilde{\lambda_i^*},\obs_i)\mid X_1^{i-1} \right] \nonumber\\
  \geq  \liminf_{N\rightarrow \infty} \frac{1}{N}\sum_{i=1}^N \min_{\cho \in \choS()}\nE \left[l(\cho,\tilde{\lambda_i^*},\obs_i)\mid X_1^{i-1} \right],\label{p2} 
\end{gather}
where the minimum is taken w.r.t. all the $\sigma(X_1^{i-1})$-measurable functions. Because the  process is stationary, we get for $\hat{\lambda_i^*}\in h^\lambda(\np_{X_0\mid \obsF_{1-i}^{-1}})$,
\begin{gather}
(\ref{p2}) =  \liminf_{N\rightarrow \infty} \frac{1}{N}\sum_{i=1}^N \min_{\cho \in \choS()}\nE \left[l(\cho,\hat{\lambda_i^*},\obs_0)\mid X_{1-i}^{-1} \right] \\ 
 =  \liminf_{N\rightarrow \infty} \frac{1}{N}\sum_{i=1}^N L^*(\np_{X_0\mid \obsF_{1-i}^{-1}}). \label{p3}
\end{gather}
Using Levy's zero-one law, $\np_{X_0\mid \obsF_{1-i}^{-1}} \rightarrow \np_\infty$ weakly  as $i$ approaches $\infty$ and from  Lemma~\ref{lem:L*} we know that $L^*$ is continuous. 
  Therefore, we can apply Lemma~\ref{lem:Ergodic} and get that a.s.
\begin{gather}
(\ref{p3}) = \nE\left[L^*(\np_\infty)\right] = \nE\left[\nE_{\np_\infty}\left[l\left(\xoptim,\lamoptim,\obs_0\right) \right]\right]   
= \nE\left[\mathcal{L}\left(\xoptim,\lamoptim,\obs_0 \right) \right]. \label{p4}
\end{gather}
Note also, that due to the complementary slackness condition of the optimal solution, i.e., $\lamoptim(\nE_{\np_\infty}\left[\cl(\xoptim,\obs_0)\right]-\gamma)=0$, we  get
\begin{gather*}
(\ref{p4}) =  \nE\left[ \nE_{\np_\infty} \left[\ml \left(\xoptim,\obs_0 \right) \right]\right] = \optimal.
\end{gather*}
From the uniqueness of $\lamoptim$, and using Lemma~\ref{lem:contin} $\hat{\lambda_i^*} \rightarrow \lamoptim$ as $i$ approaches $\infty$. Moreover, since $l$ is continuous on a compact set, $l$ is also uniformly continuous. Therefore, for any given $\epsilon>0$, there exists $\delta >0$, such that if  $|\lambda' - \lambda| <\delta$, then $$|l(\cho,\lambda',\obsr) - l(\cho,\lambda,\obsr)| <\epsilon$$ for any $\cho \in \choS$ and $\obsr \in \obsS$. Therefore, there exists $i_0$ such that if $i> i_0$ then $|l(\cho,\hat{\lambda_i^*},\obsr) - l(\cho,\lamoptim,\obsr)| <\epsilon $   for any $\cho \in \choS$ and $\obsr \in \obsS$. Thus,

\begin{gather*}
\liminf_{N \rightarrow \infty} \frac{1}{N}  \sum_{i=1}^N  l(\strategy(X_1^{i-1}), \lamoptim,\obs_i)  - \liminf_{N \rightarrow \infty} \frac{1}{N}  \sum_{i=1}^N  l(\strategy(X_1^{i-1}), \hat{\lambda_i^*},\obs_i)   \\
= \liminf_{N \rightarrow \infty} \frac{1}{N}  \sum_{i=1}^N  l(\strategy(X_1^{i-1}), \lamoptim,\obs_i)  + \limsup_{N \rightarrow \infty} \frac{1}{N}  \sum_{i=1}^N - l(\strategy(X_1^{i-1}), \hat{\lambda_i^*},\obs_i)   \\
 \geq \
 \liminf_{N \rightarrow \infty} \frac{1}{N}  \sum_{i=1}^N  l(\strategy(X_1^{i-1}), \hat{\lambda_i^*},\obs_i) -\frac{1}{N}  \sum_{i=1}^N  l(\strategy(X_1^{i-1}), \lamoptim,\obs_i)  \geq -\epsilon \phantom{a} a.s.,
\end{gather*}
and since $\epsilon$ is arbitrary,
\begin{gather*}
\liminf_{N \rightarrow \infty} \frac{1}{N}  \sum_{i=1}^N  l(\strategy(X_1^{i-1}), \lamoptim,\obs_i)   \geq  \liminf_{N \rightarrow \infty} \frac{1}{N}  \sum_{i=1}^N   l(\strategy(X_1^{i-1}), \hat{\lambda_i^*},\obs_i).
\end{gather*}
Therefore we can conclude that
\begin{gather*}
 \liminf_{N \rightarrow \infty} \frac{1}{N}  \sum_{i=1}^N  l(\strategy(X_1^{i-1}), \lamoptim,\obs_i)  \geq \optimal \phantom{a} a.s.
\end{gather*}
We  finish the proof by noticing that since $\Strategy \in \Strategy_\gamma$,  then by definition
 \begin{gather*} 
\limsup_{N \rightarrow \infty} \frac{1}{N}\sum_{i=1}^N \cl(\strategy(X_1^{i-1}),\obs_i)\leq\gamma\phantom{a} a.s. 
\end{gather*}
and since $\lamoptim$ is non negative, we will get the desired result.
 \end{proof}
%And we will assume that $\lamoptim \in \Lambda=[0,\lambda_{\max}]$.
The above lemma  also provides the motivation to find the saddle point of the Lagrangian $\mathcal{L}$. Therefore,  for the reminder of the paper we will use the loss function $l$  as defined  in Equation~\ref{eq:l_loss}.  
\section{Minimax Histogram Based Aggregation}
\label{sec:Algorithm}
\begin{algorithm}[tb!]
   \caption{Minimax Histogram Based Aggregation (MHA)}
   \label{alg:main}
\begin{algorithmic}
\STATE \textbf{Input:} Countable set of experts $\{H_{k,h}\}$ ,
$\cho_{0}\in \choS$ $\lambda_{0}\in \Lambda$, initial probability $\{\alpha_{k,h}\}$, 

\STATE \textbf{For $n=0$ to $\infty$}

\STATE \quad{}Play $\cho_{n},\lambda_{n}$. 

\STATE \quad{}Nature reveals $\obsr_{n}$

\STATE \quad{}Suffer loss $l(\cho_{n},\lambda_{n},\obsr_{n})$.
\STATE \quad{}Update the cumulative loss of the experts
\begin{gather*}
 l_{\cho,n}^{k,h}  \eqdef \sum_{i=0}^{n} l(\cho^{i}_{k,h},\lambda_{i},\obsr_i)  \phantom{aaaa}  l_{\lambda,n}^{k,h}  \eqdef \sum_{i=0}^{n} l(\cho_{i},\lambda^{i}_{k,h},\obsr_i)
\end{gather*} 

\STATE \quad{}Update experts' weights  %$w_{n}^{\cho,(k,l)}$  
\begin{gather*}
w_{n}^{\cho,(k,h)} \eqdef \alpha_{k,h}\exp\left(-\frac{1}{\sqrt{n}}l_{\cho,n}^{k,h}\right) \\
p_{n+1}^{\cho,(k,h)} \eqdef \frac{w_{n+1}^{\cho,(k,h)}}{\sum_{h=1}^{\infty}\sum_{k=1}^{\infty}w_{n+1}^{\cho,(k,h)}}
\end{gather*}
\STATE \quad{}Update  experts' weights $w_{n+1}^{\lambda,(k,h)}$ 
\begin{gather*}
w_{n+1}^{\lambda,(k,h)}  \eqdef \alpha_{k,h}\exp\left(\frac{1}{\sqrt{n}}l_{\lambda,n}^{k,h}\right) \\
p_{n+1}^{\lambda,(k,h)}=\frac{w_{n+1}^{\lambda,(k,h)}}{\sum_{h=1}^{\infty}\sum_{k=1}^{\infty}w_{n+1}^{\lambda,(k,h)}}
\end{gather*}
\STATE\quad{}Choose $\cho_{n+1}$ and $\lambda_{n+1}$ as follows
\begin{gather*}
\cho_{n+1} = \sum_{k,h}p_{n+1}^{\cho,(k,h)} \cho^{n+1}_{k,h} \phantom{aa} \lambda_{n+1} = \sum_{k,h}p_{n+1}^{\lambda,(k,h)} \lambda^{n+1}_{k,h}
\end{gather*} 
\STATE \textbf{End For}
\end{algorithmic}
\end{algorithm}

We are now ready to  present our algorithm \emph{Minimax Histogram based Aggregation (MHA)} and prove that its predictions are as good as the best strategy.
 By Theorem~\ref{lem:optimal} we can restate our goal: find a prediction strategy $\Strategy \in \Strategy_\gamma$ such that for any $\gamma$-feasible  process $\{\obsF_i \}_{-\infty}^\infty$ the following holds:
\begin{gather*}
\lim_{N \rightarrow \infty} \frac{1}{N}\sum_{i=1}^N \ml(\strategy(\obsF_1^{i-1}) ,\obs_i  ) = \optimal \phantom{a} a.s.
\end{gather*} 
%As mentioned before our algorithm will look for a saddle point, 
Such a strategy will be called \emph{$\gamma$-universal}. We do so by maintaining a  countable set of experts $\{H_{k,h}\}$, where an  expert $H_{k,l}$ will output a pair $(\cho^i_{k,l} ,\lambda^i_{k,l})\in  \choS \times \Lambda$  at round $i$. Our algorithm  outputs at   round $i$ a pair $(\cho_i ,\lambda_i)\in  \choS \times \Lambda$ where the sequence of predictions $\cho_1,\cho_2,\ldots$ tries to minimize the average loss $\frac{1}{N}\sum_{i=1}^N l(\cho,\lambda_i,\obsr_i)$ and the sequence of predictions $\lambda_1,\lambda_2,\ldots$ tries to maximize the average loss $\frac{1}{N}\sum_{i=1}^N l(\cho_i,\lambda,\obsr_i)$. Each of  $\cho_i$ and $\lambda_i$ is the aggregation of predictions $\cho^i_{k,l}$ and $\lambda^i_{k,l}$, $k,l=1,2,\ldots,$ respectively. In order to ensure that  the performance  of MHA will be as good as any other expert for both the $\cho$ and the $\lambda$ predictions, we apply  the Weak Aggregating Algorithm of \cite{Vovk2007}, and \cite{KalnishkanV2005}  twice simultaneously. 
In Theorem~\ref{thm:achive} we prove  that there exists  a countable set of experts  whose selection of points converges to the optimal solution. Then, in Theorem~\ref{thm:Main} we   prove that MHA applied on  the experts defined in Theorem~\ref{thm:achive} generates a sequence of predictions that is $\gamma$-bounded and as good as any other strategy w.r.t. any $\gamma$-feasible process.
\begin{thm}
\label{thm:achive}
Assume that $\{\obsF_i\}_{-\infty}^{\infty}$ is a $\gamma$-feasible process. Then, it is possible to construct  a countable set of experts $\{H_{k,h}\}$ for which
\begin{gather*}
\lim_{k \rightarrow \infty} \lim_{h \rightarrow \infty} \lim_{n \rightarrow \infty} \frac{1}{N} \sum_{i=1}^N l(\cho^i_{k,h},\lambda^i_{k,h}  ,\obs_i) = \optimal\phantom{a} a.s., 
\end{gather*}
where $(\cho^i_{k,h},\lambda^i_{k,h})$ are the predictions made by expert $H_{k,h} $ at round $i$.
\end{thm}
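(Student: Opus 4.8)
The plan is to build the experts $H_{k,h}$ as a doubly-indexed family of histogram-type experts, in the spirit of the Gy\"orfi--type non-parametric constructions cited in the introduction, and then show that as the histogram is refined the empirical saddle-point computed on each cell converges to the global saddle-point of $\mathcal{L}$. Concretely, I would fix for each $k$ a partition $\mathcal{P}_k$ of the observation space $\cX = [-D,D]^d$ into finitely many cells whose diameter tends to $0$ as $k \to \infty$, and for each $h$ fix a window length (how far back into the past the expert looks), together with a quantization of the past at resolution indexed by $h$. Expert $H_{k,h}$, at round $i$, looks at the string of quantized past observations of length determined by $h$, collects all earlier time indices $j<i$ whose quantized past falls in the same cell of $\mathcal{P}_k$ (a ``matching past'' set), forms the empirical measure $\nq^{k,h}_i$ over $\{x_j\}$ on that set, and outputs the pair $(\cho^i_{k,h},\lambda^i_{k,h}) = (h^\cho(\nq^{k,h}_i), h^\lambda(\nq^{k,h}_i))$ using the measurable selection functions of Lemma~\ref{lem:contin} (or the regularized versions $h^\cho_{L_n}, h^\lambda_{L_n}$ of Corollary~\ref{corr:con} to force uniqueness of the argmin/argmax and hence genuine convergence of the selected points).

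The core of the argument is then the triple limit, taken in the stated order $n\to\infty$, then $h\to\infty$, then $k\to\infty$. First, for fixed $k,h$: the matching-past sets partition $\{1,\dots,N\}$ into finitely many subsequences, each of which, by stationarity and ergodicity, has a well-defined asymptotic relative frequency equal to the probability of the corresponding quantized-past cell, and along each such subsequence the empirical measure $\nq^{k,h}_i$ converges weakly almost surely to the true conditional distribution of $X_0$ given that quantized past event (this is the standard ergodic-theorem-plus-Glivenko--Cantelli step, and is where one invokes Lemma~\ref{lem:Ergodic}). Combining this with the continuity of $L^*$ (Lemma~\ref{lem:L*}) and the continuity of the selections on the set where the limiting saddle point is unique (Lemma~\ref{lem:contin} / Corollary~\ref{corr:con}), the Ces\`aro average $\frac1N\sum_{i=1}^N l(\cho^i_{k,h},\lambda^i_{k,h},x_i)$ converges a.s.\ to a weighted average, over quantized-past cells, of $L^*$ evaluated at the corresponding conditional distributions --- i.e.\ to $\nE[L^*(\nq^{(k,h)}_\infty)]$ where $\nq^{(k,h)}_\infty$ is the conditional law of $X_0$ given the $\sigma$-algebra generated by the $h$-step quantized past at resolution $k$. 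Then, letting $h\to\infty$ and $k\to\infty$, these coarsened conditioning $\sigma$-algebras increase to $\nf_\infty$, so by the martingale convergence theorem (Levy's zero-one law, exactly as in the proof of Theorem~\ref{lem:optimal}) $\nq^{(k,h)}_\infty \to \np_\infty$ weakly a.s., and one last application of continuity of $L^*$ together with dominated convergence (the integrand is bounded since $\cY,\Lambda,\cX$ are compact) gives $\nE[L^*(\nq^{(k,h)}_\infty)] \to \nE[L^*(\np_\infty)] = \nE[\mathcal{L}(\xoptim,\lamoptim,X_0)] = \optimal$, where the last two equalities are precisely the computation already carried out in display~(\ref{p4}) of the previous proof, using complementary slackness.

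The main obstacle I expect is the first step --- making the per-expert limit fully rigorous when the selection correspondences $h^\cho, h^\lambda$ are only upper hemicontinuous rather than continuous. One has to handle the cells where the limiting conditional saddle point is not a singleton: there the selected points $\cho^i_{k,h}$ need not converge, only the \emph{value} $l(\cho^i_{k,h},\lambda^i_{k,h},x_i)$ is pinned down (through $L^*$), and one must argue that it is still the value that enters the Ces\`aro average. This is exactly why Corollary~\ref{corr:con} is stated: working with the strictly convex/concave perturbations $L_n$ gives unique selections for each finite $n$, the perturbation cost is $O(1/n)$ and hence Ces\`aro-negligible, and the corollary delivers the needed a.s.\ convergence of $L_n(u_n,v_n,\nq_n)$ to $L(u_\infty,v_\infty,\nq_\infty)$ at the limiting distribution. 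A secondary technical point is to verify the domination hypothesis $\nE\sup_n|f_n| < \infty$ required by Lemma~\ref{lem:Ergodic} when it is applied to the relevant sequence of functions of the shifted process; this follows from boundedness of $l$ on the compact product $\cY\times\Lambda\times\cX$, but must be stated. Once these are in place, the three limits chain together as above and the theorem follows.
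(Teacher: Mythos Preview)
Your architecture is essentially the paper's: histogram experts indexed by a partition refinement and a window length, empirical conditional measures on matching-past cells, saddle points of the empirical Lagrangian, then a triple limit. Two points deserve comment.

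\textbf{A real gap in the first limit.} You correctly flag that the selections $h^\cho,h^\lambda$ are only upper hemicontinuous, but your proposed fix --- use the $L_n$-perturbation of Corollary~\ref{corr:con} --- does not close the gap. Corollary~\ref{corr:con} guarantees convergence of the \emph{saddle value} $L_n(u_n,v_n,\nq_n)\to L(u_\infty,v_\infty,\nq_\infty)$, and point convergence of $(u_n,v_n)$ only when the limiting saddle set for $L$ is a singleton. What Breiman's Lemma~\ref{lem:Ergodic} needs, however, is a.s.\ convergence of $f_n(\mathbf{X})=l(\cho^n_{k,h},\lambda^n_{k,h},X_0)$ to some $f(\mathbf{X})$: this is convergence of the \emph{instantaneous} loss at the realized $X_0$, not of the expected loss $L^*$. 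If the limiting saddle point at level $(k,h)$ is not unique, $(\cho^n_{k,h},\lambda^n_{k,h})$ may oscillate and $f_n$ need not converge. The paper's remedy is a \emph{nested} regularization
\[
l_{k,h,n}(\cho,\lambda,\obsr)=l(\cho,\lambda,\obsr)+\bigl(\|\cho\|^2-\|\lambda\|^2\bigr)\Bigl(\tfrac1n+\tfrac1h+\tfrac1k\Bigr),
\]
so that after $n\to\infty$ the residual $\tfrac1h+\tfrac1k$ term keeps the limit objective $l_{k,h}$ strictly convex--concave; the limiting saddle point at level $(k,h)$ is then unique, Corollary~\ref{corr:con} yields $(\cho^n_{k,h},\lambda^n_{k,h})\to(\cho^*_{k,h},\lambda^*_{k,h})$, and Lemma~\ref{lem:Ergodic} applies. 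The same trick is repeated at the $h\to\infty$ stage (residual $1/k$). Your sentence ``only the value $l(\cho^i_{k,h},\lambda^i_{k,h},x_i)$ is pinned down (through $L^*$)'' conflates the instantaneous loss with its conditional expectation; $L^*$ controls only the latter.

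\textbf{Your final limit is cleaner than the paper's.} For $k\to\infty$ (window length to infinity) you invoke Levy's zero--one law, $\np_{X_0\mid X_{-k}^{-1}}\to\np_\infty$ weakly a.s., then continuity of $L^*$ (Lemma~\ref{lem:L*}) and dominated convergence. This is correct and shorter than what the paper does: there the authors sandwich $\nE[l(\hat{\cho}^*_k,\hat{\lambda}^*_k,X_0)]$ between $\nE[Z'_k]$ and $\nE[Z_k]$, where $Z_k,Z'_k$ are defined by mixing $\sigma(X_{-k}^{-1})$-measurable outer optimizers with $\sigma(X_{-\infty}^{-1})$-measurable inner ones, and use super/sub-martingale convergence. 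Both routes arrive at $\optimal$; yours exploits directly that $\nE[l(\hat{\cho}^*_k,\hat{\lambda}^*_k,X_0)]=\nE[L^*(\np_{X_0\mid X_{-k}^{-1}})]$ via the tower property. (Minor note: in your write-up the roles of $k$ and $h$ are swapped relative to the paper, and at one point you assign both window length and quantization to $h$; keep them separate.)
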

\begin{proof}
We  start by defining  a countable set of experts $\{H_{k,h}\}$  as follow: For $h = 1,2,\ldots$,
let $P_h = \{A_{h,j} \mid   j = 1, 2, . . . ,m_h \}$ be a sequence of finite partitions of $\obsS$ such that: (i) any cell of $P_{h+1}$ is a subset of a cell of $P_h$ for any $h$. Namely, $P_{h+1}$ is a refinement of $P_{h}$; (ii) for a set $A$, if $diam(A) = \sup_{x,y\in A} ||x-y||$ denotes the diameter of $A$, then for any sphere $B$ centered at the origin,
$$
\lim_{h \rightarrow \infty} \max_{j: A_{h,j} \cap B \neq \emptyset } diam(A_{h,j}) =  0.
$$
Define the corresponding quantizer $q_h(x) = j$, if $x \in A_{h,j}$. Thus, for any $n$ and $\obsF_1^n$, we define  $Q_h(\obsF_1^n)$ as the sequence $q_h(\obsr_1), \ldots ,q_h(\obsr_n)$.
%( can be seen as the bandwidth of the kernels). 
For expert $H_{k,h}$, we define for $k>0$, a $k$-long string of positive integers, denoted by $\stri$,  the following set, %$s\in \obsS^{ k}$ 
\begin{gather*}
B^{\stri ,(1,n-1)}_{k,h}\eqdef \{\obsr_i \mid k<i<n,\phantom{a} Q_h(\obsF_{i-k}^{i-1})= \stri \}. 
\end{gather*}
We define also
\begin{gather*}
h_{k,h}^\cho (\obsF_{1}^{n-1},\stri) \eqdef   
\arg\min_{\cho \in \choS } \left(\max_{\lambda\in \Lambda} \frac{1}{|B^{\stri,(1,n-1)}_{k,h}|}\sum_{\obsr_i\in B^{\stri,(1,n-1)}_{k,h}}l_{k,l,n}(\cho,\lambda,\obsr_{i})\right)
\\
h_{k,h}^\lambda (\obsF_{1}^{n-1},\stri) \eqdef 
\arg\max_{\lambda\in \Lambda} \left( \min_{\cho \in \choS} \frac{1}{|B^{\stri,(1,n-1)}_{k,h}|}\sum_{\obsr_i\in B^{\stri,(1,n-1)}_{k,h}}l_{k,l,n}(\cho,\lambda,\obsr_{i})\right)
\end{gather*}
for 
\begin{gather*}
l_{k,h,n}(\cho,\lambda,\obsr) \eqdef 
 l(\cho,\lambda,\obsr) +\left(||\cho||^2-||\lambda||^2\right) \left(\frac{1}{n} +\frac{1}{h} +\frac{1}{k} \right)   
\end{gather*}
and we will set $h^\cho_{k,h} (\obsF_{1}^{n-1},\stri) = \cho_{0} $ and $h^\lambda_{k,h} (\obsF_{1}^{n-1},\stri) =\lambda_{0}$ for arbitrary $\left(\cho_{0},\lambda_{0}\right) \in \choS \times \Lambda$ if $B^{\stri,(1,n-1)}_{k,h}$ is empty.
Using the above, we  define the predictions of $H_{k,h}$ to be:
\begin{gather*}
H^\cho_{k,h} (\obsF_{1}^{n-1})=h^\cho_{k,h}(\obsF_{1}^{n-1},Q(\obsF_{n-k}^{n-1})),\ n=1,2,3.... \\
H^\lambda_{k,h} (\obsF_{1}^{n-1})=h^\lambda_{k,h}(\obsF_{1}^{n-1},Q(\obsF_{n-k}^{n-1})),\ n=1,2,3....
\end{gather*}

%Note that each expert $H_{k,h}$ recommends a pair $(\cho,\lambda) \in \choS \times\Lambda$
%
%Writing this differently
%
%\begin{gather*}
%h_{k,h}(Y_{1}^{n-1},s)= \\ 
%\arg\min_{x\in\Gamma}\max_{\lambda\in[0,\lambda_{max}]}\frac{\sum_{\{k<i<n:||Y_{i-k}^{i-1}-s||\leq c/l\}}l(x,\lambda,y_{i})}{|\{k<i<n:||Y_{i-k}^{i-1}-s||\leq c/l\}|}
%\end{gather*}
We will add two  experts: $H_{0,0}$ whose predictions are always $(\cho_0 ,\lambda_{\max})$ and $H_{-1,-1}$ whose predictions are always $(\cho_0 ,0)$.  

Fixing $k,h>0$ and $\stri$, we will %\subset \obsS^k
define a (random) measure $\np_{j,\stri}^{(k.h)}$ that is the measure
concentrated on the set $B_{k,h}^{\stri,(0,1-j)}$, defined by
\begin{gather*}
\np_{j,{\stri}}^{(k,h)}(A)=\frac{\sum_{\obsF_i\in B^{\stri,(0,1-j)}_{k,h}}1_{A}(\obsF_{i})}{|B^{\stri,(0,1-j)}_{k,h}|}, %,\text{\quad\quad}A\subset \obsS
\end{gather*}
where $1_{A}$ denotes the indicator function of the set $A\subset \obsS$. If the above set  $B^{\stri}_{k,h}$ is empty, then let $P_{j,\stri}^{(k,h)}(A)=\delta(\obsr')$ be the probability measure concentrated on arbitrary  vector $\obsr'\in \obsS$.

In other words, $\np_{j,\stri}^{(k.h)}(A)$ is the relative frequency of
the the vectors among $\obsF_{1-j+k},\ldots,\obsF_{\ensuremath{0}}$ that fall
in the set $A$.
%*** short version
%Using  Lemma~\ref{lem:Ergodic} it's straightforward to prove that for all $\stri$, w.p. $1$ \footnote{\label{foot:short} Due to lack of space we omit this part in the proof. This part will appear in the extended version of this paper. We refer the interested reader to \cite{Gyorfi2007} for more details }
Applying  Lemma~\ref{lem:Ergodic} twice, it is straightforward to prove that for all $\stri$, w.p. $1$
\begin{gather*}
\np_{j,\stri}^{(k,h)}\rightarrow\begin{cases}
\np_{\obsF_{0}\mid G_l(\obsF_{-k}^{-1})=\stri} & \np(G_l(\obsF_{-k}^{-1})=\stri)>0\\
\delta(\obsr') & \textit{otherwise}%\np(||\obsF_{-k}^{-1}-s||\leq c/l)=0
\end{cases}
\end{gather*}
weakly as $j\rightarrow\infty$, where $\np_{\obsF_{0}\mid G_l(\obsF_{-k}^{-1})=\stri}$
denotes the distribution of the vector $\obsF_{0}$ conditioned on the
event $G_l(\obsF_{-k}^{-1})=\stri$. 
%%***** long version
To see this, let $f$ be a bounded continuous function. Then,
\begin{gather*}
\int f(x)\np_{j,\stri}^{(k,h)}(dx) = \frac{\frac{1}{|1-j+k|}\sum_{\obsF_i\in B^{\stri,(0,1-j)}_{k,h}}f(\obsF_{i})}{\frac{1}{|1-j+k|}|B^{\stri,(0,1-j)}_{k,h}|} \\%,\text{\quad\quad}A\subset \obsS \\
\rightarrow \frac{\nE \left[ f(\obsF_{0})1_{G_l(\obsF_{-k}^{-1})=\stri}(\obsF_{0})\right]}{\np( G_l(\obsF_{-k}^{-1})=\stri)} 
= \nE \left[ f(\obsF_{0}) \mid G_l(\obsF_{-k}^{-1})=\stri\right],
\end{gather*}
and in case $\np(||\obsF_{-k}^{-1}-s||\leq c/l)=0$, then w.p. $1$, $\np_{j,\stri}^{(k,h)}$ is concentrated on $x'$ for all $j$.  
%**** end of long version
We will denote the limit distribution of $\np_{j,\stri}^{(k,h)}$
by $\np_{\stri}^{*(k,h)}$.

By definition, $\left(h^\cho_{k,h}(\obsF_{1-n}^{-1},\stri),h^\lambda_{k,h}(\obsF_{1-n}^{-1},\stri)\right)$ is the minimax of $l_{n,k,h}$ w.r.t. $\np_{j,\stri}^{(k,h)}$. 
The sequence of functions $l_{n,k,h}$ converges uniformly as $n$ approaches   $\infty$ to 
$$l_{k,h}(\cho,\lambda,\obsr) = l(\cho,\lambda,\obsr) +\left(||\cho||^2-||\lambda||^2 \right) \left( \frac{1}{h} +\frac{1}{k} \right).   
$$
Note also that for any fixed $\nq$, $L_{k,h} (\cho,\lambda,\nq)= \nE_{\nq}\left[ l_{k,h} (\cho,\lambda,\obs) \right]$ is strictly convex in $\cho$ and strictly concave in $\lambda$, and therefore, has a unique saddle-point (see, e.g., \cite{LouHXSJ2016}). Therefore, since $\stri$ is arbitrary, and following  a Corollary~\ref{corr:con} of Lemma~\ref{lem:contin},  we get that a.s. %Therefore, since $\stri$ is arbitrary, and as a simple corollary of Lemma~\ref{lem:contin} we get that a.s.
\begin{gather*}
\cho_{k,h}^{n} \rightarrow  \cho_{k,h}^{*} \phantom{aaa} \lambda_{k,h}^{n} \rightarrow \lambda_{k,h}^{*},
\end{gather*}
where $\left(\cho_{k,h}^{*},\lambda_{k,h}^{*}\right)$
is the minimax of $L_{k,h}$ w.r.t. $\np_{\obsF^{-1}_{-k}}^{*(k,h)}$.
%The sequence of functions $L_{n,k,l} (\cho,\lambda,\nq)= \nE_{\nq}\left[ l_{n,k,l} (\cho,\lambda,\cho) \right]$ converges uniformly as $n$ approaches to $\infty$ for any fixed $\nq$ to $L_{k,h} (\cho,\lambda,\nq)= \nE_{\nq}\left[ l_{k,h} (\cho,\lambda,\cho) \right]$.
%Since $s$ was arbitrary and from Lemmas~\ref{lem:Ergodic} and~\ref{lem:contin}. Moreover from Corollary~\ref{corr:con} we get that a.s.
%\begin{gather*}
%\cho_{k,h}^{n} \rightarrow  \cho_{k,h}^{*}\\
%\lambda_{k,h}^{n} \rightarrow \lambda_{k,h}^{*}
%\end{gather*}
Thus, we can apply Lemma~\ref{lem:Ergodic} and  conclude that as $N$ approaches  $\infty$, 
$$
\frac{1}{N}\sum_{i=1}^{N}l(\cho_{k,h}^{i},\lambda_{k,h}^{i},\obs_{i})\rightarrow \nE\left[l(\cho_{k,h}^{*},\lambda_{k,h}^{*},\obs_0)\right].
$$
a.s.. 
We now evaluate 
$$
\lim_{h\rightarrow\infty} \nE\left[l(\cho_{k,h}^{*},\lambda_{k,h}^{*},\obs_0)\right].
$$
Using the properties of the partition $P_h$ (see, e.g., \cite{GyorfiL2005,GyorfiS2003}), we get that %Lebesgue density theorem 
\[
\np_{\obsF_{-k}^{-1}}^{*(k,h)} \rightarrow\np_{\left\{ \obsF_{0}\mid \obsF_{-k}^{-1}\right\}} 
\]
%\footnote{See Footnote~\ref{foot:short}}
weakly as $h\rightarrow\infty$. Moreover, the sequence of functions $l_{k,h}$ converges uniformly as $h$ approaches $\infty$ 
$$l_{k}(\cho,\lambda,\obsr) = l(\cho,\lambda,\obsr) +  \frac{||\cho||^2 -||\lambda||^2 }{k}.    
$$
Note also, that for any fixed $\nq$, $L_{k} (\cho,\lambda,\nq)= \nE_{\nq}\left[ l_{k} (\cho,\lambda,\obs) \right]$ is strictly convex-concave, and therefore, has a unique saddle point.  
Accordingly, by applying  Corollary~\ref{corr:con} again, we get that a.s.
\begin{gather*}
\cho_{k,h}^{*} \rightarrow  \cho_{k}^{*} \phantom{aaaa} \lambda_{k,h}^{*} \rightarrow \lambda_{k}^{*},
\end{gather*}
where $\left(\cho_{k}^{*},\lambda_{k}^{*}\right)$ is the minimax of $L_{k}$ w.r.t. $\np_{\left\{ \obsF_{0}\mid \obsF_{-k}^{-1}\right\}} $. Therefore, as $h$ approaches $\infty$, 
\begin{gather*}
l(\cho_{k,h}^{*},\lambda_{k,h}^{*},\obs_0) \rightarrow l\left(\cho_{k}^{*},\lambda_{k}^{*},\obs_0 \right) 
\end{gather*}
a.s.. Thus, by  Lebesgue's dominated convergence,  
\begin{gather*}
\lim_{h\rightarrow\infty} \nE\left[l(\cho_{k,h}^{*},\lambda_{k,h}^{*},\obs_0)\right]
=\nE\left[l\left(\cho_{k}^{*},\lambda_{k}^{*},\obs_0 \right)\right].
\end{gather*}

Notice that for any  $\nq \in \np(\obsS)$, the  distance between  the saddle point of $L_{k}$ w.r.t. $\nq $  and the the saddle point of $L$ w.r.t. $\nq $ converges to $0$ as $k$ approaches  $\infty$. To see this, notice that
%Notice that for any $k$ the distance of the saddle point of $L_{k}$ w.r.t. $\np_{\left\{ \obs_{0}\mid \obsF_{-k}^{-1}\right\}} $  and the the saddle point of $L$ w.r.t. the same probability is bounded by $ \frac{\lambda_{\max}}{k}$, to see this, notice that for any $\nq \in \np(\obsS)$
\begin{gather*}
\min_{\cho \in \choS}\max_{\lambda \in \Lambda} L(\cho,\lambda , \nq) - \frac{||\lambda_{\max}||^2}{k} \leq \min_{\cho \in \choS}\max_{\lambda \in \Lambda} L_{k}(\cho,\lambda , \nq) \\ 
 \leq \min_{\cho \in \choS}\max_{\lambda \in \Lambda} L(\cho,\lambda , \nq)  + \frac{E}{k}
\end{gather*}
for some constant $E$, since $\choS$ is bounded.
The last part in our proof will be to show that if $(\hat{\cho^*_k},\hat{\lambda^*_k})$ is the minimax of $L$ w.r.t. $\np_{\left\{ \obsF_{0}\mid \obsF_{-k}^{-1}\right\}}$, then as $k$ approaches   $\infty$, $\nE\left[l\left(\hat{\cho^*_k},\hat{\lambda^*_k},\obs_0\right)\right]$ will converge a.s. to  $\optimal$ and so   $\nE\left[l\left(\cho^*_k,\lambda^*_k,\obs_0\right)\right]$.

%$$
%=\nE\left[l\left(x_{\infty}^{*},\lambda_{\infty}^{*},y\right)\right] 
%$$

%where $x_{\infty}^{*},\lambda_{\infty}^{*})$ is the optimal solution w.r.t. $f_\infty$.
To show this, we will use the sub-martingale convergence theorem twice.
First, we define $Z_k$ as
$$
Z_{k} \eqdef \min_{\cho \in \choS()}  \nE\left[ \max_{\lambda \in \Lambda()} \nE\left[l\left(\cho,\lambda,\obs_0\right) \mid \obsF_{-\infty}^{-1} \right] \mid \obsF_{-k}^{-1}\right], 
$$
where the minimum is taken w.r.t. all $\sigma(\obsF_{-k}^{-1})$-measurable strategies and the maximum is taken w.r.t. all $\sigma(\obsF_{-\infty}^{-1})$-measurable strategies.
Notice that $Z_k$ is a super-martingale. We can see this by using the tower property of conditional expectations,
$$
\nE[Z_{k+1}\mid \obsF_{-k}^{-1}]=\nE\left[\nE\left[Z_{k+1}\mid \obsF_{-k-1}^{-1}\right]\mid \obsF_{-k}^{-1}\right]
$$
and since $Z_{k+1}$ is the optimal choice in $\choS$ w.r.t. to  $\obsF_{-k-1}^{-1}$,
\[
\leq\nE\left[\nE[Z_{k}\mid \obsF_{-k-1}^{-1}]\mid \obsF_{-k}^{-1}\right] = \nE[Z_{k} \mid \obsF_{-k}^{-1}]   =Z_k.
\]
%and since $Z_k$ is already adapted to the $\sigma$-algebra

 %and therefore $Z_k$ is a super-martingale.

Note also that $\nE[Z_k]$ is uniformly bounded. Therefore, we can apply the  super-martingale convergence theorem and get that $Z_k \rightarrow Z_{\infty}$ a.s., where,  
\[
Z_{\infty}= \nE\left[l(\cho_{\infty}^{*},\lambda_{\infty}^{*},\obs_0)\mid \obsF_{-\infty}^{-1}\right] =\optimal,
\]
 and by using Lebesgue's dominated convergence theorem, also $\nE[Z_k] \rightarrow \nE[Z_{\infty}] =\optimal$.
Using the same arguments, $Z'_{k}$, defined as 
$$
Z'_{k} \eqdef \max_{\lambda \in \Lambda()}  \nE\left[ \min_{\cho \in \choS()} \nE\left[l\left(\cho,\lambda,\obs_0\right) \mid \obsF_{-\infty}^{-1} \right] \mid \obsF_{-k}^{-1}\right], 
$$
where the maximum is taken w.r.t. all $\sigma(\obsF_{-k}^{-1})$-measurable strategies and the minimum is taken w.r.t. all $\sigma(\obsF_{-\infty}^{-1})$-measurable strategies, is a sub-martingale that also converges a.s.   to $Z_{\infty}$ and thus $\nE[Z'_k] \rightarrow \nE[Z_{\infty}] =\optimal$. 

We  conclude the proof by noticing that the following relation holds for any $k$,  
\begin{gather*}
\nE [Z'_{k}]=\nE\left[\max_{\lambda \in \Lambda()}  \nE\left[ \min_{\cho \in \choS()} \nE\left[l\left(\cho,\lambda,\obs_0\right) \mid \obsF_{-\infty}^{-1} \right] \mid \obsF_{-k}^{-1}\right]\right] \\
\leq \nE\left[\max_{\lambda \in \Lambda()} \nE\left[\nE\left[l\left(\hat{\cho_k^*},\lambda,\obs_0\right)\mid \obsF_{-\infty}^{-1}\right] \mid \obsF_{-k}^{-1} \right]\right] 
\\
=  \nE\left[\max_{\lambda \in \Lambda()} \nE\left[l\left(\hat{\cho_k^*} ,\lambda,\obs_0\right) \mid \obsF_{-k}^{-1} \right]\right] = 
\nE\left[l\left(\hat{\cho_{k}^{*}},\hat{\lambda_k^{*}},\obs_0\right)\right],  
\end{gather*}
%\begin{gather*}
%\nE [Z'_{k}]=\nE\left[\min_{\lambda \in \Lambda()} \nE\left[l\left(\xoptim,\lambda,\obs\right)\mid \obsF_{-k}^{-1}\right] \right] \\
%\leq \nE\left[  \nE\left[ l\left(\xoptim,\hat{\lambda_k^*},\obs\right) \mid \obsF_{-k}^{-1} \right]\right]   \\
%\leq \nE\left[  \nE\left[\nE\left[l\left(\hat{\cho_k^*},\lambda,\obs\right)\mid \obsF_{-\infty}^{-1}\right] \mid \obsF_{-k}^{-1} \right]\right] 
%\\
%=  \nE\left[\min_{\lambda \in \Lambda()} \nE\left[l\left(\hat{\cho_k^*} ,\lambda,\obs\right) \mid \obsF_{-k}^{-1} \right]\right] = 
%\nE\left[l\left(\hat{\cho_{k}^{*}},\hat{\lambda_k^{*}},\obs\right)\right]  
%\end{gather*}
and using similar arguments we can show that also
$$
 \nE\left[l\left(\hat{\cho_{k}^{*}},\hat{\lambda_k^{*}},\obs_0\right)\right] \leq \nE [Z_{k}],  
$$
 
and since both $\nE [Z_{k}]$ and $\nE [Z'_{k}]$ converge   to $\optimal$, we get the desired result.

\end{proof}

Before stating the main theorem regarding MHA, we now state and prove the following  lemma, which  is used in the proof of the main result regarding MHA.
\begin{lemma}
\label{lem:ineq}
Let $\{H_{k,h}\}$ be a countable set of experts as defined in the proof of Theorem~\ref{thm:achive}. Then, the following relation holds a.s.:
%\begin{gather*}
% \limsup_{n \rightarrow \infty} \inf_{k,h} \frac{1}{N} \sum_{i=1}^{N} l\left(\cho^{i}_{k,h}, \lambda_i,\obs \right) \leq  \optimal \\
%\leq \liminf_{n \rightarrow \infty} \sup_{k,h} \frac{1}{N} \sum_{i=1}^{N}  l\left(\cho_i, \lambda^{i}_{k,h},\obs \right) 
%\end{gather*}
\begin{gather*}
  \inf_{k,h} \limsup_{n \rightarrow \infty} \frac{1}{N} \sum_{i=1}^{N} l\left(\cho^{i}_{k,h}, \lambda_i,\obs_i \right) \leq  \optimal \\
\leq \sup_{k,h} \liminf_{n \rightarrow \infty}  \frac{1}{N} \sum_{i=1}^{N}  l\left(\cho_i, \lambda^{i}_{k,h},\obs_i \right), 
\end{gather*}
where $(\cho_i,\lambda_i)$ are the predictions of MHA when applied on $\{H_{k,h}\}$.
 \end{lemma}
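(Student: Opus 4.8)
The plan is to prove the two displayed inequalities separately; since they are dual, I would give the first one, $\inf_{k,h}\limsup_N\frac1N\sum_{i=1}^N l(\cho^i_{k,h},\lambda_i,\obs_i)\le\optimal$, in detail and deduce the second, $\optimal\le\sup_{k,h}\liminf_N\frac1N\sum_{i=1}^N l(\cho_i,\lambda^i_{k,h},\obs_i)$, by the symmetric argument (interchanging $\min$ with $\max$, $\cho$ with $\lambda$, $\limsup$ with $\liminf$, and the sign of the regularizer). Fix $(k,h)$ and a memory length, write $\stri_i\eqdef Q_h(\obsF_{i-k}^{i-1})$, and let $g^\cho_{k,h}(\stri)$ be the $\cho$-component of the (unique, by the strict convex--concave regularization) saddle point of $L_{k,h}(\cdot,\cdot,\np_{\obsF_0\mid Q_h(\obsF_{-k}^{-1})=\stri})$. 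As shown inside the proof of Theorem~\ref{thm:achive} (via Corollary~\ref{corr:con} applied cell by cell, there being finitely many cells for fixed $k,h$), the expert's prediction stabilizes, $\cho^i_{k,h}-g^\cho_{k,h}(\stri_i)\to 0$ a.s. as $i\to\infty$; since $l$ is uniformly continuous on the compact $\choS\times\Lambda\times\obsS$, replacing $\cho^i_{k,h}$ by $g^\cho_{k,h}(\stri_i)$ leaves every $\limsup$ of a Cesàro average of $l$-values unchanged, so it suffices to control $\limsup_N\frac1N\sum_{i=1}^N l(g^\cho_{k,h}(\stri_i),\lambda_i,\obs_i)$.

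I would control this in three moves. \emph{(i) Removing $\lambda_i$ by a martingale argument.} Since $g^\cho_{k,h}(\stri_i)$ and $\lambda_i$ are $\sigma(\obsF_1^{i-1})$-measurable while $\obs_i$ is not, $l(g^\cho_{k,h}(\stri_i),\lambda_i,\obs_i)-\nE[l(g^\cho_{k,h}(\stri_i),\lambda_i,\obs_i)\mid\obsF_1^{i-1}]$ is a bounded martingale-difference sequence, so by the strong law for martingale differences the two Cesàro averages have the same $\limsup$, and since $\lambda_i\in\Lambda$, $\nE[l(g^\cho_{k,h}(\stri_i),\lambda_i,\obs_i)\mid\obsF_1^{i-1}]\le\sup_{\lambda\in\Lambda}\nE[l(g^\cho_{k,h}(\stri_i),\lambda,\obs_i)\mid\obsF_1^{i-1}]$. \emph{(ii) Breiman's generalized ergodic theorem.} This last expression equals $f_i(T^i\bX)$ for $f_i(\bX)=\sup_{\lambda\in\Lambda}\nE[l(g^\cho_{k,h}(Q_h(\obsF_{-k}^{-1})),\lambda,\obs_0)\mid\obsF_{1-i}^{-1}]$; by Levy's zero--one law $\np_{\obsF_0\mid\obsF_{1-i}^{-1}}\to\np_\infty$ weakly a.s., and by Lemma~\ref{lem:L*} the map $\mathbb{Q}\mapsto\sup_\lambda\nE_{\mathbb{Q}}[l(\cho,\lambda,\obsr)]$ is continuous, so (the cell taking finitely many values) $f_i(\bX)\to f(\bX)\eqdef\sup_{\lambda\in\Lambda}\nE_{\np_\infty}[l(g^\cho_{k,h}(Q_h(\obsF_{-k}^{-1})),\lambda,\obs_0)]$ a.s.; $\sup_i|f_i|$ is bounded, so Lemma~\ref{lem:Ergodic} gives $\limsup_N\frac1N\sum_{i=1}^N l(g^\cho_{k,h}(\stri_i),\lambda_i,\obs_i)\le V^\cho_{k,h}\eqdef\nE[f(\bX)]$. \emph{(iii) Sandwiching $V^\cho_{k,h}$ around $\optimal$.} For every $\cho$ the minimax inequality gives $\sup_\lambda\nE_{\np_\infty}[l(\cho,\lambda,\obs_0)]\ge\inf_\cho\sup_\lambda\nE_{\np_\infty}[l(\cho,\lambda,\obs_0)]=L^*(\np_\infty)$, hence $V^\cho_{k,h}\ge\nE[L^*(\np_\infty)]=\optimal$; conversely, letting $h\to\infty$ and then $k\to\infty$, the cell-conditional converges weakly to $\np_\infty$ and $l_{k,h}\to l$ uniformly, so by the continuity of $L^*$ (Lemma~\ref{lem:L*}), Corollary~\ref{corr:con}, and stability of approximate minimizers under uniform convergence (exactly the argument of the super-/sub-martingale sandwich $\nE[Z'_k]\le\nE[l(\hat\cho^*_k,\hat\lambda^*_k,\obs_0)]\le\nE[Z_k]$ closing the proof of Theorem~\ref{thm:achive}) one gets $\sup_\lambda\nE_{\np_\infty}[l(g^\cho_{k,h}(Q_h(\obsF_{-k}^{-1})),\lambda,\obs_0)]\to L^*(\np_\infty)$ a.s., whence $V^\cho_{k,h}\to\optimal$ by dominated convergence. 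Combining, $\inf_{k,h}\limsup_N\frac1N\sum_{i=1}^N l(\cho^i_{k,h},\lambda_i,\obs_i)\le\inf_{k,h}V^\cho_{k,h}=\optimal$.

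For the second inequality the identical three moves apply with minimization and maximization interchanged: stabilize $\lambda^i_{k,h}$ to the $\lambda$-component $g^\lambda_{k,h}(\stri_i)$ of the saddle point; use the martingale-difference decomposition to pass from $l(\cho_i,g^\lambda_{k,h}(\stri_i),\obs_i)$ to $\nE[\,\cdot\mid\obsF_1^{i-1}]$ and then bound it below by $\inf_{\cho\in\choS}\nE[l(\cho,g^\lambda_{k,h}(\stri_i),\obs_i)\mid\obsF_1^{i-1}]$; apply Lemma~\ref{lem:Ergodic} to obtain $\liminf_N\frac1N\sum_{i=1}^N l(\cho_i,\lambda^i_{k,h},\obs_i)\ge V^\lambda_{k,h}\eqdef\nE[\inf_\cho\nE_{\np_\infty}[l(\cho,g^\lambda_{k,h}(Q_h(\obsF_{-k}^{-1})),\obs_0)]]$; and observe, by the minimax equality of Lemma~\ref{lem:L*}, that $\inf_\cho\nE_{\np_\infty}[l(\cho,\lambda,\obs_0)]\le\sup_\lambda\inf_\cho\nE_{\np_\infty}[l(\cho,\lambda,\obs_0)]=L^*(\np_\infty)$, so $V^\lambda_{k,h}\le\optimal$, while the same stabilization/continuity argument yields $V^\lambda_{k,h}\to\optimal$; hence $\sup_{k,h}\liminf_N\frac1N\sum_{i=1}^N l(\cho_i,\lambda^i_{k,h},\obs_i)\ge\sup_{k,h}V^\lambda_{k,h}=\optimal$. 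The step I expect to be the main obstacle is the combination of move (ii) with the second half of (iii): one must legitimately trade the history-adapted, non-stationary MHA prediction $\lambda_i$ (resp.\ $\cho_i$) for a worst-case (resp.\ best-case) constant inside a one-step conditional expectation, recognize the resulting average as a Breiman-type average of shifts of an a.s.-convergent function, and then verify that its value collapses to $\optimal$ once $h$ and then $k$ tend to infinity --- the final collapse being precisely where the complementary-slackness identity $\nE_{\np_\infty}[l(\xoptim,\lamoptim,\obs_0)]=\nE_{\np_\infty}[\ml(\xoptim,\obs_0)]$ and the uniqueness of $\lamoptim$ enter, just as in Theorems~\ref{lem:optimal} and \ref{thm:achive}.
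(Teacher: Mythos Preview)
Your proposal is correct and takes essentially the same approach as the paper: a bounded martingale-difference step to pass to one-step conditional expectations, a bound by $\sup_{\lambda\in\Lambda}$ (resp.\ $\inf_{\cho\in\choS}$), Breiman's ergodic theorem via $\cho^i_{k,h}\to\cho^*_{k,h}$ and $\np_{\obsF_0\mid\obsF_{1-i}^{-1}}\to\np_\infty$, and then the iterated limits $h\to\infty$, $k\to\infty$ using the convergences established in Theorem~\ref{thm:achive}. Two minor remarks: your preliminary stabilization of $\cho^i_{k,h}$ to $g^\cho_{k,h}(\stri_i)$ is harmless but unnecessary, since the paper lets Lemma~\ref{lem:Ergodic} absorb that convergence directly; and the final collapse $V^\cho_{k,h}\to\optimal$ in the paper is obtained from the accumulation-point fact for $\{\cho^*_k\}$ (every limit point lies in $\arg\min_\cho\sup_\lambda\nE_{\np_\infty}[l(\cho,\lambda,\obs_0)]$) together with continuity of $f$, not from the super-/sub-martingale sandwich you cite---your extra lower bound $V^\cho_{k,h}\ge\optimal$ is likewise correct but not needed for the lemma.
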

 
\begin{proof}

Set
\begin{gather*}
 f(\cho,\nq) \eqdef \max_{\lambda \in \Lambda} \nE_{\nq}\left[ l\left(\cho, \lambda ,\obs_0\right) \right]. 
\end{gather*}
We will start from the LHS,  
\begin{gather}
% \limsup_{n \rightarrow \infty} \inf_{k,h} \frac{1}{N} \sum_{i=1}^{N} l\left(\cho^{i}_{k,h}, \lambda_i,\obs \right) \nonumber \\
  \inf_{k,h} \limsup_{n \rightarrow \infty} \frac{1}{N} \sum_{i=1}^{N} l\left(\cho^{i}_{k,h}, \lambda_i,\obs_i \right), \label{l4e1}
\end{gather}
and similarly to Lemma~\ref{lem:optimal}, by using the strong law of large numbers we can write
\begin{gather}
(\ref{l4e1})= 
     \inf_{k,h} \limsup_{n \rightarrow \infty} \frac{1}{N} \sum_{i=1}^{N} \nE\left[l\left(\cho^{i}_{k,h}, \lambda_i,\obs_0 \right) \mid \obsF_{1-i}^{-1} \right] \nonumber \\
\leq   \inf_{k,h} \limsup_{n \rightarrow \infty} \frac{1}{N} \sum_{i=1}^{N}  f(\cho^{i}_{k,h},\np_{X_0\mid \obsF_{1-i}^{-1}})  \phantom{a} a.s. \label{l4e2}  %long version
\end{gather}
%*****short version
For fixed $k,h>0$, from the proof of Theorem~(\ref{thm:achive}), $\cho^{i}_{k,h} \rightarrow \cho^{*}_{k,h}$ a.s. as $i$ approaches $\infty$, and from Levy's zero-one law also $\np_{X_0\mid \obsF_{1-i}^{-1}} \rightarrow \np_{\infty}$ weakly. From Lemma~\ref{lem:L*} we know that $f$ is continuous, therefore, 
 we can apply Lemma~\ref{lem:Ergodic} and get that
%**** end of short version
%
%by Levy's zero-one law 
%%$$\nE\left[ f(\cho,\obs) \mid \obsF_{1-i}^{-1} \right] \rightarrow \nE\left[ f(\cho ,\obs) \mid \obsF_{-\infty}^{-1} \right]$$  a.s. as $i$ approaches to $\infty$, and also $\cho^{i}_{k,h} \rightarrow \cho^{*}_{k,h}$ a.s. by the proof of Theorem~(\ref{thm:achive}).
%Thus, we can show that 
%$$ 
%%\nE\left[ f(\cho^{i}_{k,h},\obs) \mid \obsF_{1-i}^{-1} \right] \rightarrow \nE\left[ f(\cho^{*}_{k,h} ,\obs) \mid \obsF_{-\infty}^{-1} \right]
%%$$
%%a.s. as $i$ approaches $\infty$.
%%***************
%Therefore, we can apply  Lemma~(\ref{lem:Ergodic}) and get that
\begin{gather}
(\ref{l4e2}) =    \inf_{k,h}  \nE\left[ \nE\left[ f(\cho^{*}_{k,h},\np_{\infty})  \right] \right] 
 \leq   \lim_{k\rightarrow \infty} \lim_{l\rightarrow \infty} \nE\left[ f(\cho^{*}_{k,h},\np_{\infty})\right]. \label{l4e3}
\end{gather}

%and since $(x^{n}_{k,h}, \lambda_n^{*})$ are already adapted to the $\sigma$-algebra generated by %$Y_1^{n-1}$
%\begin{gather*}
%\leq    \inf_{k,h} \limsup_{n \rightarrow \infty} \frac{1}{N} \sum_{i=1}^{N} \nE\left[l\left(\cho^{n}%_{k,h}, \hat{\lambda}_n^{*},\obs\right) \mid \obsFs_1^{n-1} \right] \\
%= \inf_{k,h} \limsup_{n \rightarrow \infty} \frac{1}{N} \sum_{i=1}^{N} l\left(x^{n}_{k,h}, \hat{\lambda}%_n^{*},y\right)  
%\end{gather*}
% Lemma~\ref{lem:contin} we know that
From the uniqueness of the saddle point and from the proof of Theorem~(\ref{thm:achive}), for fiked $k>0$,  $$  \lim_{h \rightarrow \infty}\cho^{*}_{k,h} \rightarrow \cho^{*}_{k} $$ a.s.. Thus, from the continuity of $f$ we get that
$$
\lim_{h \rightarrow \infty} f(\cho^{*}_{k,h},\np_{\infty}) \rightarrow f(\cho^{*}_{k},\np_{\infty})
$$
and again by Lebesgue's dominated convergence,
\begin{gather}
(\ref{l4e3})= \lim_{k \rightarrow \infty}\nE\left[ f(\cho^{*}_{k},\np_{\infty})  \right] = \lim_{k \rightarrow \infty} \nE\left[\max_{\lambda \in \Lambda} \nE_{\np_{\infty}}\left[ l\left(\cho^{*}_{k}, \lambda ,\obs_0\right) \right]\right]\label{l4e4*}.
\end{gather}
Now, from Theorem~\ref{thm:achive} we know that every accumulation point of the sequence $\{\cho^{*}_{k} \}$ is in the optimal set
$$
\arg\min_{\cho \in \choS} \left( \max_{\lambda \in \Lambda} \nE_{\np_{\infty}}\left[ l\left(\cho, \lambda ,\obs_0\right) \right] \right).
$$
Therefore a.s.
$$
\lim_{k \rightarrow \infty} \max_{\lambda \in \Lambda} \nE_{\np_{\infty}}\left[ l\left(\cho^{*}_{k}, \lambda ,\obs_0\right) \right] \rightarrow   \nE_{\np_{\infty}}\left[ l\left(\xoptim, \lamoptim ,\obs_0\right) \right],  
$$
and using Lebesgue's dominated convergence,
\begin{gather*}
(\ref{l4e4*}) = \nE \left[\nE_{\np_{\infty}}\left[ l\left(\xoptim, \lamoptim ,\obs_0\right) \right]\right] =\optimal.
\end{gather*}
Using similar arguments, we can show the second part of the lemma.
%\begin{gather*}
%\optimal \leq \sup_{k,h}  \liminf_{n \rightarrow \infty} \frac{1}{N} \sum_{i=1}^{N}  l\left(\cho_i, \lambda^{*}_{k,h},\obs \right).
%\end{gather*}

\end{proof}
We are now ready to state and prove the optimality of MHA.
\begin{thm}[Optimality of MHA]
\label{thm:Main} 
Let $(\cho_i,\lambda_i)$ be the predictions generated by MHA when applied on $\{H_{k,h}\}$  as defined in the proof of Theorem~\ref{thm:achive}. Then, for any $\gamma$-feasible process $\{\obsF_i\}_{-\infty}^{\infty}$: MHA is a $\gamma$-bounded and $\gamma$-universal strategy.
 \end{thm}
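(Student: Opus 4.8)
The plan is to derive $\gamma$-universality and $\gamma$-boundedness together, in four moves. First I would observe that MHA is a legitimate non-anticipating strategy: every expert output and every aggregation weight at round $n$ is a function of $\obsF_1^{n-1}$ and of the algorithm's own past predictions, so $\cho_n$ depends only on $\obsF_1^{n-1}$. Since $l$ is continuous on the compact set $\choS\times\Lambda\times\obsS$ it is bounded, and by Equation~(\ref{eq:l_loss}) $l$ is affine in $\lambda$ and convex in $\cho$; hence the Weak Aggregating Algorithm of \cite{Vovk2007} and \cite{KalnishkanV2005}, run with the summable prior $\{\alpha_{k,h}\}$, yields for every fixed expert $H_{k,h}$ the two deterministic regret bounds
\begin{gather*}
\frac{1}{N}\sum_{i=1}^{N} l(\cho_i,\lambda_i,\obs_i) \;\le\; \frac{1}{N}\sum_{i=1}^{N} l(\cho^i_{k,h},\lambda_i,\obs_i) + o(1), \\
\frac{1}{N}\sum_{i=1}^{N} l(\cho_i,\lambda_i,\obs_i) \;\ge\; \frac{1}{N}\sum_{i=1}^{N} l(\cho_i,\lambda^i_{k,h},\obs_i) - o(1),
\end{gather*}
the first because the $\cho$-copy of the algorithm minimizes, the second because the $\lambda$-copy maximizes (the averaged prediction inherits the bound by convexity, resp.\ affineness, of $l$).

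Second, taking $\limsup_N$ and then $\inf_{k,h}$ in the first inequality, $\liminf_N$ and then $\sup_{k,h}$ in the second, and inserting Lemma~\ref{lem:ineq}, I would obtain
\begin{gather*}
\limsup_{N\to\infty}\frac{1}{N}\sum_{i=1}^{N} l(\cho_i,\lambda_i,\obs_i) \le \optimal \le \liminf_{N\to\infty}\frac{1}{N}\sum_{i=1}^{N} l(\cho_i,\lambda_i,\obs_i),
\end{gather*}
so $\frac1N\sum_{i=1}^N l(\cho_i,\lambda_i,\obs_i)\to\optimal$ a.s. Writing $u_N\eqdef\frac1N\sum_{i=1}^N \ml(\cho_i,\obs_i)$, $c_N\eqdef\frac1N\sum_{i=1}^N(\cl(\cho_i,\obs_i)-\gamma)$ and $v_N\eqdef\frac1N\sum_{i=1}^N\lambda_i(\cl(\cho_i,\obs_i)-\gamma)$, this says $u_N+v_N\to\optimal$.

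Third, I would feed the two special experts of Theorem~\ref{thm:achive} into the $\lambda$-side bound. Against $H_{-1,-1}$ (always playing $\lambda\equiv 0$) it gives $v_N\ge -o(1)$, hence $\limsup_N u_N\le\optimal$; against $H_{0,0}$ (always playing $\lambda\equiv\lambda_{\max}$) it gives $v_N\ge\lambda_{\max}c_N-o(1)$, hence $c_N \le (\optimal-u_N)/\lambda_{\max} + o(1)$. Separately, the argument in the proof of Theorem~\ref{lem:optimal} that produces $\liminf_N\frac1N\sum_{i=1}^N l(\cho_i,\lamoptim,\obs_i)\ge\optimal$ never uses $\gamma$-boundedness, so it applies to MHA and gives $\liminf_N(u_N+\lamoptim c_N)\ge\optimal$. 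Multiplying the bound on $c_N$ by $\lamoptim\ge 0$ and adding $u_N$ gives $u_N+\lamoptim c_N\le(1-\theta)u_N+\theta\optimal+o(1)$ with $\theta\eqdef\lamoptim/\lambda_{\max}\in[0,1)$; taking $\liminf_N$ yields $\optimal\le(1-\theta)\liminf_N u_N+\theta\optimal$, i.e.\ $\liminf_N u_N\ge\optimal$. Combined with $\limsup_N u_N\le\optimal$ this gives $u_N\to\optimal$ a.s.\ ($\gamma$-universality), and then $c_N\le(\optimal-u_N)/\lambda_{\max}+o(1)\to 0$, i.e.\ $\limsup_N c_N\le 0$ a.s.\ ($\gamma$-boundedness).

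The step I expect to be the main obstacle is the third: there is an apparent circularity, since $u_N\to\optimal$ would be immediate from Theorem~\ref{lem:optimal} once MHA is known to be $\gamma$-bounded, while $\gamma$-boundedness in turn seems to need control of $u_N$. The plan breaks the loop by retaining the \emph{constraint-free} lower bound $\liminf_N(u_N+\lamoptim c_N)\ge\optimal$ that the proof of Theorem~\ref{lem:optimal} actually delivers, pairing it with the $H_{0,0}$ upper bound $u_N+\lambda_{\max}c_N\le\optimal+o(1)$, and using the \emph{strict} inequality $\lamoptim<\lambda_{\max}$ to solve the two inequalities for $\liminf_N u_N$; this is precisely where the design choice $\lambda_{\max}>\lamoptim$ is used. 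A more routine point still needing care is that the Weak Aggregating Algorithm's $o(\sqrt N)$ regret genuinely holds against each of the countably many experts (including $H_{0,0}$ and $H_{-1,-1}$) under the prior $\{\alpha_{k,h}\}$, which rests on boundedness of $l$ from compactness of $\choS$, $\Lambda$, $\obsS$.
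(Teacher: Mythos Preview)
Your argument is correct. The first two moves---the WAA regret inequalities and the sandwich via Lemma~\ref{lem:ineq} to get $\frac1N\sum_i l(\cho_i,\lambda_i,\obs_i)\to\optimal$---are exactly what the paper does. The divergence is in how $\gamma$-boundedness and $\gamma$-universality are then extracted.

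The paper proceeds by invoking the triple-limit convergence $\lambda^i_{k,h}\to\lamoptim$ established inside Theorem~\ref{thm:achive}, and uses uniform continuity of $l$ together with the $\lambda$-side regret bound to show $\frac1N\sum_i l(\cho_i,\lamoptim,\obs_i)\to\optimal$; comparing this with the $H_{0,0}$ bound yields $\lambda_{\max}\limsup_N c_N\le\lamoptim\limsup_N c_N$, whence $\limsup_N c_N\le0$, and only then is $v_N\to0$ and $u_N\to\optimal$ deduced. Your route is more direct: you combine three scalar inequalities---$\limsup_N u_N\le\optimal$ from $H_{-1,-1}$, $u_N+\lambda_{\max}c_N\le\optimal+o(1)$ from $H_{0,0}$, and the constraint-free lower bound $\liminf_N(u_N+\lamoptim c_N)\ge\optimal$ lifted from the proof of Theorem~\ref{lem:optimal}---and solve them algebraically, exploiting $\theta=\lamoptim/\lambda_{\max}<1$. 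This avoids re-running the $\epsilon$--$\delta$ uniform-continuity argument on the experts' $\lambda$-predictions and does not touch the internal structure of Theorem~\ref{thm:achive} beyond the existence of $H_{0,0}$, $H_{-1,-1}$ and Lemma~\ref{lem:ineq}. What the paper's detour buys is the auxiliary fact $\frac1N\sum_i l(\cho_i,\lamoptim,\obs_i)\to\optimal$; your approach is shorter but does not produce this as a byproduct. Both approaches hinge on the same two design choices: the strict inequality $\lamoptim<\lambda_{\max}$ and the inclusion of the constant experts $H_{0,0}$, $H_{-1,-1}$.
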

\begin{proof}
We  first show that 
\begin{gather}
\lim_{N \rightarrow \infty} \frac{1}{N}\sum_{i=1}^N  l(\cho_i,\lambda_i ,\obs_i  ) = \optimal \phantom{a} a.s. \label{eq:MHAoptim}
\end{gather}
Applying Lemma~5 in \cite{KalnishkanV2005}, we know that the $x$ updates guarantee  that for every expert $H_{k,h}$,

\begin{gather}
\frac{1}{N}\sum_{i=1}^{N}l(\cho_{i},\lambda_{i},\obsr_{i}) \leq  \frac{1}{N}\sum_{i=1}^{N}l(\cho_{k,h}^{i},\lambda_{i},\obsr_{i}) + \frac{C_{k,h}}{\sqrt{N}} \label{t3e1}\\
\frac{1}{N}\sum_{i=1}^{N}  l(\cho_{i},\lambda_{i},\obsr_{i}) \geq  \frac{1}{N}\sum_{i=1}^{N}  l(\cho_{i},\lambda_{k,h}^{i},\obsr_{i}) - \frac{C'_{k,h}}{\sqrt{N}}, \label{t3e2} 
\end{gather}

 where $C_{k,h},C'_{k,h}>0$ are some constants independent of $N$. In particular, using Equation~(\ref{t3e1}),
\begin{gather*}
\frac{1}{N}\sum_{i=1}^{N}l(\cho_{i},\lambda_{i},\obsr_{i}) \leq \inf_{k,h}\left( \frac{1}{N}\sum_{i=1}^{N}l(\cho_{k,h}^{i},\lambda_{i},\obsr_{i}) + \frac{C_{k,h}}{\sqrt{N}}\right). 
\end{gather*}
Therefore, we get
\begin{gather}
\limsup_{N\rightarrow \infty} \frac{1}{N}\sum_{i=1}^{N}l(\cho_{i},\lambda_{i},\obsr_{i}) \nonumber \\
 \leq \limsup_{N\rightarrow \infty}  \inf_{k,h}\left( \frac{1}{N}\sum_{i=1}^{N}l(\cho_{k,h}^{i},\lambda_{i},\obsr_{i}) + \frac{C_{k,h}}{\sqrt{N}}\right) \nonumber\\
 \leq   \inf_{k,h} \limsup_{N\rightarrow \infty}  \left( \frac{1}{N}\sum_{i=1}^{N}l(\cho_{k,h}^{i},\lambda_{i},\obsr_{i}) + \frac{C_{k,h}}{\sqrt{N}}\right) \nonumber\\
 \leq   \inf_{k,h} \limsup_{N\rightarrow \infty}  \left( \frac{1}{N}\sum_{i=1}^{N}l(\cho_{k,h}^{i},\lambda_{i},\obsr_{i}) \right), \label{t3e3} 
\end{gather}
where in the last inequality we used the  fact that $\limsup$ is sub-additive. Using Lemma~(\ref{lem:ineq}), we get that
\begin{gather}
 (\ref{t3e3}) \leq   \optimal \nonumber \\
\leq \sup_{k,h} \liminf_{n \rightarrow \infty}  \frac{1}{N} \sum_{i=1}^{N}  l\left(\cho_i, \lambda_{k,h}^{i},\obs_i \right). \label{t3e4}  
\end{gather}
Using similar arguments and using Equation~(\ref{t3e2}) we can show  that
\begin{gather*}
(\ref{t3e4}) \leq \liminf_{N\rightarrow \infty} \frac{1}{N}\sum_{i=1}^{N}l(\cho_{i},\lambda_{i},\obsr_{i}).
\end{gather*}
Summarizing, we have
\begin{gather*}
\limsup_{N\rightarrow \infty} \frac{1}{N}\sum_{i=1}^{N}l(\cho_{i},\lambda_{i},\obsr_{i}) \leq
\optimal \leq \liminf_{N\rightarrow \infty} \frac{1}{N}\sum_{i=1}^{N}l(\cho_{i},\lambda_{i},\obsr_{i}).
\end{gather*}
%and since in general 
%\begin{gather*}
%\liminf_{N\rightarrow \infty}  \frac{1}{N}\sum_{i=1}^{N}l(\cho_{i},\lambda_{i},\obs_{i}) 
% \leq    \limsup_{N\rightarrow \infty} \frac{1}{N}\sum_{i=1}^{N} l(\cho_{i},\lambda_{i},\obs_{i}) 
%\end{gather*}
Therefore, we can conclude that a.s.
$$
\lim_{N\rightarrow \infty}  \frac{1}{N}\sum_{i=1}^{N} l(\cho_{i},\lambda_{i},\obs_{i}) = \optimal.
$$
To show that MHA is indeed a $\gamma$-bounded strategy and to shorten the notation, we will denote $$g(\cho,\lambda,\obsr) \eqdef \lambda(\cl(\cho,\obsr)-\gamma).$$
First, from Equation~(\ref{t3e2}) applied on the expert $H_{0,0}$, we get that:
\begin{gather}
\limsup_{N \rightarrow \infty}  \frac{1}{N}\sum_{i=1}^{N}  g(\cho_i,\lambda_{\max},\obsr) \leq 
\limsup_{N \rightarrow \infty}  \frac{1}{N}\sum_{i=1}^{N} g(\cho_i,\lambda_i,\obsr).   \label{t3e5}
\end{gather} 
%Second, we notice that
%\begin{gather}
%\limsup_{N \rightarrow \infty}  \frac{1}{N}\sum_{i=1}^{N}  g(\cho_i,\lambda_i,\obs_i) - \limsup_{N \rightarrow \infty}  \frac{1}{N}\sum_{i=1}^{N}  l(\cho_i,\lamoptim,\obs_i) \nonumber \\
%\limsup_{N \rightarrow \infty}  \frac{1}{N}\sum_{i=1}^{N}  g(\cho_i,\lambda_i,\obs_i) - \frac{1}{N}\sum_{i=1}^{N}  g(\cho_i,\lamoptim,\obs_i) \label{t3e6}
%\end{gather} 
%******
%\begin{gather*}
% \limsup_{N \rightarrow \infty} -\frac{1}{N}\sum_{i=1}^{N}  l(\cho_i,\lambda_i,\obs_i) +   \frac{1}{N}\sum_{i=1}^{N}  l(\cho_i,\lamoptim,\obs_i)  \leq  0
%\end{gather*}
%*****
%\begin{gather*}
%  -\optimal +  \limsup_{N \rightarrow \infty} \frac{1}{N}\sum_{i=1}^{N}  l(\cho_i,\lamoptim,\obs_i)  \leq  0
%\end{gather*}
%\begin{gather*}
%   \limsup_{N \rightarrow \infty} \frac{1}{N}\sum_{i=1}^{N}  l(\cho_i,\lamoptim,\obs_i)  \leq  \optimal
%\end{gather*}
%therefore 
%\begin{gather*}
%   \lim_{N \rightarrow \infty} \frac{1}{N}\sum_{i=1}^{N}  l(\cho_i,\lamoptim,\obs_i)  =  \optimal
%\end{gather*}
%
%%
%%
%%\begin{gather}
%%\limsup_{N \rightarrow \infty}  \frac{1}{N}\sum_{i=1}^{N}  \left(\cl(\cho_{i},\obs_{i})-\gamma \right) \leq \nonumber \\
%%\limsup_{N \rightarrow \infty}  \frac{1}{N}\sum_{i=1}^{N}  \lambda_i\left(\cl(\cho_{i},\obs_{i})-\gamma \right).   \label{t3e5}
%%\end{gather} 

Moreover, since $l$ is  uniformly continuous, for any given $\epsilon>0$, there exists $\delta >0$, such that if  $|\lambda' - \lambda| <\delta$, then $$|l(\cho,\lambda',\obsr) - l(\cho,\lambda,\obsr)| <\epsilon$$ for any $\cho \in \choS$ and $\obsr \in \obsS$.
We also know that $$\lim_{k \rightarrow \infty } \lim_{h \rightarrow  \infty } \lim_{i \rightarrow  \infty } \lambda_{k,h}^i = \lamoptim.$$ Therefore, there exist $k_0 ,h_0 ,i_0$ such that $ |\lambda_{k_0 ,h_0}^{i} - \lamoptim| <\delta $ for any $i>i_0$.  
%**************** long version
Since $\lim_{k \rightarrow \infty } \lambda_k^* = \lamoptim$ there exists $k_0$ such that $|\lambda_{k_0}^* -\lamoptim |<\frac{\delta}{3}$. Note that  $\lim_{h \rightarrow \infty } \lambda_{k_0,h}^* = \lambda_{k_0}^*$, so there exists $h_0$ such that $|\lambda_{k_0,h_0}^* - \lambda_{k_0}^*|<\frac{\delta}{3}$. Finally, since $\lim_{i \rightarrow \infty } \lambda_{k_0,l_0}^i = \lambda_{k_0,l_0}^*$, there exists $i_0$ such that if $i>i_0$, then $|\lambda_{k_0,l_0}^{i} - \lambda_{k_0,l_0}^*|<\frac{\delta}{3}$. Combining all the above, we get that for $k_0 ,h_0 ,i_0$ if $i>i_0$, then
\begin{gather*}
|\lambda_{k_0 ,h_0}^{i} - \lamoptim| < |\lambda_{k_0 ,h_0}^{i} - \lambda_{k_0 ,h_0}^{*}|+|\lambda_{k_0 ,h_0}^{i}  - \lambda_{k_0}^{*}| 
+ |\lambda_{k_0}^{*}- \lamoptim| < \delta.  
\end{gather*}
Therefore,

%***************
\begin{gather}
\limsup_{N \rightarrow \infty} \left(  \frac{1}{N}\sum_{i=1}^{N}l(\cho_{i} ,\lamoptim,\obsr_{i}) - \frac{1}{N}\sum_{i=1}^{N}  l(\cho_{i},\lambda_i,\obsr_{i}) \right) \leq \nonumber  \\ 
\limsup_{N \rightarrow \infty}  \left(   \frac{1}{N}\sum_{i=1}^{N}l(\cho_{i} ,\lamoptim,\obsr_{i}) -\frac{1}{N}\sum_{i=1}^{N} l(\cho_i,\lambda_{k_0,h_0}^{i},\obsr_{i})  \right) + \nonumber \\
\limsup_{N \rightarrow \infty} \left(  \frac{1}{N}\sum_{i=1}^{N} l(\cho_{i},\lambda_{k_0,h_0}^{i},\obsr_{i}) - \frac{1}{N}\sum_{i=1}^{N}   l(\cho_{i},\lambda_i,\obsr_{i}) \right) \label{t3e6}
\end{gather} 
From the uniform continuity we also learn   that the first summand is bounded above by $\epsilon$, and from Equation~(\ref{t3e2}), we get that the last summand is  bounded above by  $0$. Thus,
\begin{gather*}
 (\ref{t3e6}) \leq  \epsilon, 
\end{gather*}
and since  $\epsilon$ is arbitrary, we get that
\begin{gather*}
\limsup_{N \rightarrow \infty} \left(  \frac{1}{N}\sum_{i=1}^{N}l(\cho_{i} ,\lamoptim,\obsr_{i}) - \frac{1}{N}\sum_{i=1}^{N}  l(\cho_{i},\lambda_i,\obsr_{i}) \right) \leq 0.
\end{gather*} 
Thus, 
\begin{gather*}
\limsup_{N \rightarrow \infty}  \frac{1}{N}\sum_{i=1}^{N}l(\cho_{i} ,\lamoptim,\obs_{i})  \leq \optimal,
\end{gather*}
and from Theorem~1 we can conclude that
\begin{gather*}
\lim_{N \rightarrow \infty}  \frac{1}{N}\sum_{i=1}^{N}l(\cho_{i} ,\lamoptim,\obs_{i})  = \optimal.
\end{gather*}
Therefore, we can deduce that
\begin{gather*}
\limsup_{N \rightarrow \infty}  \frac{1}{N}\sum_{i=1}^{N}  g(\cho_i,\lambda_i,\obsr_i) - \limsup_{N \rightarrow \infty}  \frac{1}{N}\sum_{i=1}^{N}  g(\cho_i,\lamoptim,\obsr_i) \nonumber  = \\
\limsup_{N \rightarrow \infty}  \frac{1}{N}\sum_{i=1}^{N}  g(\cho_i,\lambda_i,\obsr_i) + \liminf_{N \rightarrow \infty}  \frac{1}{N}\sum_{i=1}^{N}  -g(\cho_i,\lamoptim,\obsr_i) \nonumber \\
\leq \limsup_{N \rightarrow \infty}  \frac{1}{N}\sum_{i=1}^{N}  g(\cho_i,\lambda_i,\obsr_i) - \frac{1}{N}\sum_{i=1}^{N}  g(\cho_i,\lamoptim,\obsr_i)  \\
= \limsup_{N \rightarrow \infty}  \frac{1}{N}\sum_{i=1}^{N}  l(\cho_i,\lambda_i,\obsr_i) - \frac{1}{N}\sum_{i=1}^{N}  l(\cho_i,\lamoptim,\obsr_i) = 0, 
\end{gather*}
which results in
\begin{gather*}
\limsup_{N \rightarrow \infty}  \frac{1}{N}\sum_{i=1}^{N}  g(\cho_i,\lambda_i,\obsr_i) \leq \limsup_{N \rightarrow \infty}  \frac{1}{N}\sum_{i=1}^{N}  g(\cho_i,\lamoptim,\obsr_i).   
\end{gather*}
Combining the above with Equation~(\ref{t3e5}), we get that
\begin{gather*}
\limsup_{N \rightarrow \infty}  \frac{1}{N}\sum_{i=1}^{N}  g(\cho_i,\lambda_{\max},\obsr_i) \\
 \leq \limsup_{N \rightarrow \infty}  \frac{1}{N}\sum_{i=1}^{N}  g(\cho_i,\lamoptim,\obsr_i).   
\end{gather*}
Since $ 0 \leq \lamoptim < \lambda_{\max}$, we get that MHA is  $\gamma$-bounded. This also implies that 
$$\limsup_{N\rightarrow \infty} \frac{1}{N}\sum_{i=1}^N \lambda_i(\cl(\cho_i,\obsr_i)-\gamma) \leq 0.$$ 
 Now, if we apply  Equation~(\ref{t3e2})  on the expert $H_{-1,-1}$, we get that
$$\liminf_{N\rightarrow \infty} \frac{1}{N}\sum_{i=1}^N \lambda_i(\cl(\cho_i,\obsr_i)-\gamma) \geq 0.$$
Thus, 
$$\lim_{N\rightarrow \infty} \frac{1}{N}\sum_{i=1}^N \lambda_i(\cl(\cho_i,\obsr_i)-\gamma) = 0,$$
and using Equation~(\ref{eq:MHAoptim}), we get that MHA is also $\gamma$-universal.
\end{proof}
%\paragraph{Implementation Issues}
%In the proofs of the theorems and lemmas above, we used the fact that the initial   weights of the experts, $\alpha_{k,h}$, are strictly positive. In practice, however, one cannot maintain a countable set of experts. Therefore, it is common to apply the algorithm with a finite number of experts  (see \cite{Gyorfi2007,GyorfiLU2006,GyorfiS2003,LiHG2011}). Another point that should be mentioned is that despite the fact that  in the proof we assumed that the observation set $\obsS$ is known a priori,  the algorithm can also  be   applied in the case that $\obsS$ is unknown by applying the doubling trick. For a further discussion on this point, see \cite{GyorfiL2005}.     

\section{Concluding Remarks}
In this paper, we   introduced the Minimax Histogram Aggregation (MHA) algorithm for multiple-objective sequential prediction.  We considered the general setting where the unknown
underlying process is stationary and ergodic., and given that the underlying process is $\gamma$-feasible, we extended the well-known  result of \cite{Algoet1994} regarding the asymptotic lower bound of prediction with a single objective, to the case of multi-objectives.   We proved that  MHA  is  a $\gamma$-bounded strategy whose predictions also converge to the optimal solution in hindsight. 

In the proofs of the theorems and lemmas above, we used the fact that the initial   weights of the experts, $\alpha_{k,h}$, are strictly positive thus implying a countably infinite expert set. In practice, however, one cannot maintain an infinite set of experts. Therefore, it is customary to apply such algorithms with a finite number of experts  (see \cite{Gyorfi2007,GyorfiLU2006,GyorfiS2003,LiHG2011}). 
Despite the fact that  in the proof we assumed that the observation set $\obsS$ is known a priori,  the algorithm can also  be   applied in the case that $\obsS$ is unknown by applying the doubling trick. For a further discussion on this point, see \cite{GyorfiL2005}.     
In our proofs, we relied  on the compactness of the set $\obsS$. It will be interesting to see whether the universality of MHA can be sustained under unbounded processes as well. A very interesting open question would be to identify conditions allowing for finite sample bounds when predicting with multiple objectives. 

\bibliography{Bibliography}
\bibliographystyle{plain}
\end{document}